\documentclass[dvipsnames]{article} %
\usepackage{colm2024_conference}
\pdfoutput=1
\usepackage{booktabs}
\usepackage{graphicx}
\usepackage{enumitem}
\usepackage{wrapfig}
\usepackage{algorithm}
\usepackage{algpseudocode}
\usepackage{microtype}
\usepackage{amsmath}
\usepackage{amsthm}
\usepackage{colortbl}
\usepackage[utf8]{inputenc}
\definecolor{lightgray}{rgb}{0.9,0.9,0.9}
\usepackage{caption}
\usepackage{subcaption}
\usepackage{setspace}
\usepackage{url}
\usepackage{multirow}
\usepackage{colortbl}
\usepackage{tabularx}
\usepackage{blindtext}
\usepackage{pgfplots}
\pgfplotsset{compat=1.18} 
\usepackage{tikz}
\usetikzlibrary{er,positioning,bayesnet}
\usepackage{makecell}
\usepackage{siunitx}
\usepackage{nicefrac}
\usepackage{tocloft}
\usepackage{listings}
\usepackage[nameinlink,capitalize]{cleveref}
\usepackage{placeins}
\usepackage{breakcites}
\usepackage[raster,skins]{tcolorbox} %
\usepackage{xltabular}
\usepackage{adjustbox}
\usepackage{xurl}
\usepackage{rotating}
\usepackage[normalem]{ulem}
\useunder{\uline}{\ul}{}
\usepackage{CJKutf8}
\usepackage{pifont}
\usepackage{xcolor}

\usepackage{amsmath,amsfonts,bm}

\def\eqref#1{equation~\ref{#1}}

\def\1{\bm{1}}

\DeclareMathAlphabet{\mathsfit}{\encodingdefault}{\sfdefault}{m}{sl}
\SetMathAlphabet{\mathsfit}{bold}{\encodingdefault}{\sfdefault}{bx}{n}

\newcommand*\justify{%
  \fontdimen2\font=0.4em%
  \fontdimen3\font=0.2em%
  \fontdimen4\font=0.1em%
  \fontdimen7\font=0.1em%
  \hyphenchar\font=`\-%
}

\renewcommand{\texttt}[1]{%
  \begingroup
  \ttfamily
  \begingroup\lccode`~=`/\lowercase{\endgroup\def~}{/\discretionary{}{}{}}%
  \begingroup\lccode`~=`[\lowercase{\endgroup\def~}{[\discretionary{}{}{}}%
  \begingroup\lccode`~=`.\lowercase{\endgroup\def~}{.\discretionary{}{}{}}%
  \catcode`/=\active\catcode`[=\active\catcode`.=\active
  \justify\scantokens{#1\noexpand}%
  \endgroup
}

\usepackage{amssymb}
\usepackage{fontawesome5}  
\tikzset{capt/.style={}}
\usetikzlibrary{positioning,fit,calc,arrows.meta,backgrounds,decorations.pathreplacing}

\theoremstyle{plain}
\newtheorem{theorem}{Theorem}[section]
\newtheorem{proposition}[theorem]{Proposition}

\theoremstyle{remark}

\definecolor{edge}{HTML}{1B2B3A}
\definecolor{navy}{HTML}{1F3A5F}
\definecolor{sharedgreen}{HTML}{B5E3A2}
\definecolor{fusionpink}{HTML}{E4A2DE}
\definecolor{salmon}{HTML}{F7C7AE}
\definecolor{cellgray}{HTML}{F2F2F2}
\definecolor{cellblue}{HTML}{47AFE0}
\definecolor{barA}{HTML}{45B0E0}
\definecolor{barB}{HTML}{86CCEC}
\definecolor{barC}{HTML}{C4E4F6}
\definecolor{insetgray}{HTML}{C8C8C8}
 
\tikzset{
  every node/.style={font=\sffamily\footnotesize},
  arr/.style={-{Latex[length=4pt,width=3.5pt]}, line width=0.8pt, edge},
  ln/.style={line width=0.8pt, edge},
  block/.style={draw=edge, line width=0.7pt, rounded corners=3pt, minimum height=0.62cm,
                align=center, inner sep=2pt, font=\sffamily\footnotesize},
  shared/.style={block, fill=sharedgreen, minimum width=1.8cm},
  fusion/.style={block, fill=fusionpink},
  decoder/.style={block, fill=salmon, text width=1.8cm},
  tok/.style={draw=edge, line width=0.6pt, fill=cellgray, minimum width=0.64cm, minimum height=0.38cm,
              inner sep=0pt, font=\scriptsize},
  cell/.style={draw=edge, line width=0.6pt, fill=cellgray, rounded corners=2pt,
               minimum width=0.76cm, minimum height=0.42cm, inner sep=0pt},
  bluecell/.style={cell, fill=cellblue},
  emptycell/.style={cell, fill=white, line width=0.45pt, dash pattern=on 1.5pt off 1pt},
  concat/.style={circle, draw=edge, line width=0.7pt, fill=white, minimum size=0.26cm, inner sep=0pt,
                 path picture={\draw[line width=0.8pt, edge]
                   ([xshift=-0.075cm]path picture bounding box.center) -- ([xshift=0.075cm]path picture bounding box.center)
                   ([yshift=-0.075cm]path picture bounding box.center) -- ([yshift=0.075cm]path picture bounding box.center);}},
  lab/.style={font=\sffamily\scriptsize},
  boxdash/.style={draw=navy, line width=0.8pt, dash pattern=on 3.5pt off 2pt},
}

 \newcommand{\stk}[4]{%
  \node[#3] at ({#1+0.2*#2},{3.81+0.3*#2}) {};
  \ifnum#2=0 \node[font=\footnotesize] at (#1,3.81) {#4};%
  \else \node[font=\footnotesize, inner sep=0pt] at ({#1+0.2*#2+0.1},{3.81+0.3*#2+0.07}) {#4};\fi}
 
\newlength{\echoTitleW}
\newcommand{\echoTitleSecond}{%
  \settowidth{\echoTitleW}{\hspace{2.5em}ECHO: Efficient Causal Self-Recursion Beyond}%
  \makebox[\echoTitleW][c]{Auxiliary Multi-Token Prediction}}
\title{ECHO: Efficient Causal Self-Recursion Beyond\\\echoTitleSecond}

\author{%
\begin{tabular}{@{}c@{}}
\bf Yuxuan Zhou$^{1\dagger}$, Junyuan Shang$^{1}$, Tianyu Wang$^{2}$, Chunrui Zheng$^{1}$, Zichang Liu$^{1}$, \\
\bf Wenbin Chang$^{1}$, Shuohuan Wang$^{1}$, Dianhai Yu$^{1}$ \\[0.3em]
$^{1}$\textbf{ERNIE Team, Baidu} \quad $^{2}$\textbf{Independent Researcher} \\[0.2em]
{\small \faGithub\ \href{https://github.com/ernie-research/ECHO}{\texttt{https://github.com/ernie-research/ECHO}}}
\end{tabular}%
}

\usepackage{etoolbox}
\newcommand{\echodate}{Oct 17, 2026}
\patchcmd{\mymaketitle}{Feb 4, 2026}{\echodate}{}{\PackageWarning{echo}{date patch failed}}
\makeatletter
\fancypagestyle{normalpage}{
  \fancyhf{}
  \lhead{\includegraphics[height=15pt]{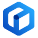}}
  \chead{\textcolor[HTML]{4D4D4D}{\textbf{\thetitle}}}
  \rhead{\textcolor[HTML]{4D4D4D}{\textbf{\echodate}}}
  \renewcommand{\headrulewidth}{0.4pt}
  \renewcommand{\headrule}{{\color[HTML]{4D4D4D}\hrule\@height\headrulewidth\@width\headwidth\vskip-\headrulewidth}}
  \cfoot{\thepage}
}
\makeatother

\makeatletter\def\@abstract{
Existing Multi-token prediction (MTP) methods are auxiliary by design: the multi-horizon loss mostly trains modules attached to the backbone, and each added horizon costs another module and another pass. We propose \textbf{Efficient Causal Self-Recursion (ECHO)}, which removes both costs. The backbone's own final layer predicts every horizon, so the supervision lands on the parameters that generate the next token, and \textbf{Dynamic Depth Sampling (DDS)} keeps deeper horizons affordable by drawing the training depth with probabilities that decay along depth. Future prediction thus becomes part of the autoregressive objective rather than a task attached to it, and the gains follow on both sides of it. In pretraining, added depth converts into downstream accuracy where auxiliary modules saturate, concentrating on reasoning and code at half the MTP training overhead of training every depth. At inference, the pretrained model is its own drafter: it drafts beyond the depth it was trained for, and keeps its draft-target alignment on the open-ended, high-entropy generation where post-hoc drafters lose theirs. The advantage survives a doubled token budget, a repeated pretraining run, and supervised finetuning.

}\makeatother

\let\echoOldSection\section
\renewcommand{\section}{\FloatBarrier\echoOldSection}

\begin{document}
\begingroup\renewcommand\thefootnote{\textdagger}\footnotetext{Project lead.}\endgroup
\maketitle
\renewcommand{\thetitle}{ECHO: Efficient Causal Self-Recursion}

\pagestyle{normalpage}
\clearpage

\section{Introduction}
\label{sec:intro}

\begin{figure}[t]
  \centering
  \begin{subfigure}[t]{3.60in}
    \centering
    \includegraphics[width=\linewidth]{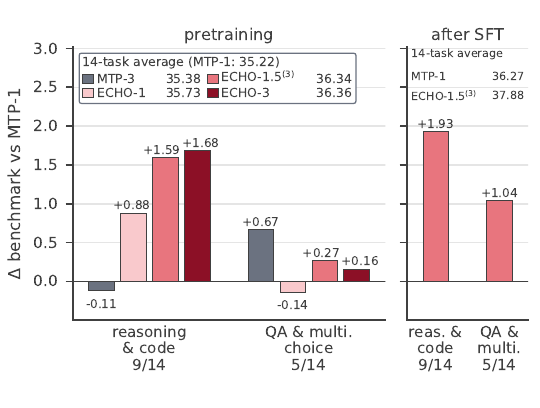}
    \caption{Benchmark gain by task class over the fourteen benchmarks of
      \Cref{tab:benchmark_results,tab:sft}, each panel against the DeepSeek baseline of
      its own stage. ECHO-1.5$^{(3)}$ matches ECHO-3's gain at half its MTP overhead.}
    \label{fig:stage_pairs}
  \end{subfigure}\hfill
  \begin{subfigure}[t]{2.48in}
    \centering
    \includegraphics[width=\linewidth]{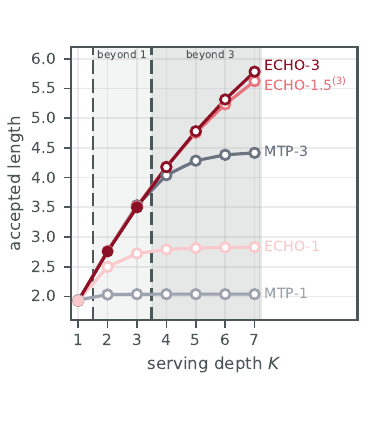}
    \caption{Accepted length against serving depth $K$, filled inside the trained depth
      and hollow beyond it. ECHO keeps gaining where auxiliary MTP saturates.}
    \label{fig:extra}
  \end{subfigure}
  \caption{Multi-horizon supervision on the backbone's own final layer buys downstream performance after pre-training and supervised fine-tuning (a) and drafting depth beyond what was
    trained (b).}
  \label{fig:teaser}
\end{figure}


Large language models (LLMs)
\citep{achiam2023gpt,touvron2023llama,bai2023qwen,liu2024deepseek}
are predominantly trained with the
\emph{Next-Token Prediction} (NTP) objective.
While NTP has demonstrated remarkable scalability and effectiveness,
it provides supervision only at the immediate next-token horizon,
leaving potentially useful future information unexploited.
Moreover, autoregressive decoding remains inherently sequential,
resulting in limited inference throughput.

To better utilize future information,
recent work has explored \emph{Multi-Token Prediction} (MTP)
as an auxiliary objective on top of NTP.
By introducing additional future-token prediction targets,
MTP provides denser supervision during training and can generate draft tokens for speculative decoding, enabling inference acceleration.

Early MTP methods predict multiple future positions independently from the same hidden state \cite{gloeckle2024better}, providing denser supervision but ignoring causal dependencies among future tokens. Subsequent causal MTP methods such as DeepSeek-V3 \cite{liu2024deepseek} address this limitation by introducing autoregressive dependencies among future prediction steps, significantly improving prediction quality and improving the backbone model's overall performance \cite{liu2024deepseek,blakeman2025nvidia,zeng2026glm}. 

However, both causal and parallel MTP remain auxiliary by construction: future-token prediction is carried out by dedicated modules attached to the backbone, rather than being integrated into the autoregressive generation process itself. This leads to two limitations:


\begin{enumerate}

\item \textbf{Indirect Multi-Horizon Supervision}

MTP losses primarily supervise the auxiliary prediction modules.
Consequently, the backbone model receives multi-horizon learning signals only indirectly, potentially limiting their impact on long-horizon planning.

\item \textbf{Costly Long-Horizon Supervision}

Denser supervision requires deeper prediction horizons, yet each additional horizon adds a prediction module in memory and another pass in compute.
This cost limits the prediction horizon in large-scale causal MTP pretraining, typically to a single draft token.

\end{enumerate}

To address the above limitations, we introduce \textbf{Efficient Causal Self-Recursion (ECHO)}, a causal self-recursive
framework that reuses the backbone's final layer for future prediction and
samples per-step prediction depths with decaying probabilities to reduce the amortized MTP training overhead.

\emph{Final-layer parameter sharing} ensures that multi-horizon losses directly shape the backbone, rather than being absorbed by horizon-specific auxiliary modules. 
As shown in \Cref{fig:buckets_ds_pretrain}, this allows ECHO to benefit from deeper supervision: its downstream performance continues to improve at depth 3, whereas DeepSeek's MTP~\citep{liu2024deepseek} does not, consistent with observations in MiMo~\citep{xiaomi2025mimo}. 
More fundamentally, training the same parameters responsible for next-token generation to predict multiple future tokens encourages the backbone to anticipate how the sequence unfolds over subsequent steps, as evidenced by ECHO's substantial gains on reasoning tasks.

To make long-horizon supervision computationally efficient, we introduce \emph{Dynamic Depth Sampling (DDS)}, which samples prediction depths with decaying probabilities. This simultaneously reduces the amortized MTP training overhead and mitigates the impact of increasing prediction errors and supervision noise at greater depths. After supervised finetuning, ECHO's advantage grows further.

\textbf{Beyond its pretraining benefits, ECHO entirely eliminates the need for post-hoc drafter finetuning}. DDS makes longer-horizon supervision affordable, while final-layer parameter sharing enables depth extrapolation: since the same shared layer is reused at every draft step, ECHO can extend drafting beyond the trained prediction depth while maintaining high acceptance rates, at an amortized MTP training overhead of only 1.5 depth-1 steps (see \Cref{sec:draft-deep}). By drafting directly with its pretrained weights, ECHO avoids the overfitting to academic benchmarks associated with post-hoc drafter finetuning~\citep{liu2026angelspecrealworldhighperformance}, yielding substantial acceptance-length gains in realistic, high-entropy generation settings as shown in \Cref{fig:accept_entropy_merged}.

Our main contributions are summarized as follows:

\begin{itemize}

\item We introduce \textbf{Efficient Causal Self-Recursion (ECHO)}, which predicts every future horizon by recursively applying the backbone's own final layer, with the per-step prediction depth drawn from a distribution decaying along depth. 

\item ECHO imposes direct multi-horizon supervision, beating auxiliary MTP on reasoning and code at matched depth while holding on QA \& multiple choice, and widening its lead on both classes after supervised finetuning.

\item ECHO makes long-horizon prediction cheap in compute and free in parameters, and keeps gaining acceptance when extrapolating in depth, where auxiliary MTP saturates. It eliminates drafter finetuning entirely, yet outperforms post-hoc drafters, especially on open-ended generation.
\end{itemize}

\section{Related Work}
\subsection{Multi-Token Prediction}

Multi-Token Prediction (MTP) augments next-token prediction with future-token targets, improving language modeling while providing draft tokens for speculative decoding. It inherits the architecture of speculative draft models and differs in the training regime: draft models are finetuned post-hoc against a frozen backbone on a limited dataset, whereas MTP modules are optimized jointly with the backbone throughout pretraining.

Parallel MTP predicts multiple future positions independently from several output heads on a shared backbone~\cite{gloeckle2024better}, following Medusa~\cite{cai2024medusa}. Causal MTP instead follows EAGLE~\cite{li2024eagle}: DeepSeek-V3~\cite{liu2024deepseek} feeds dedicated prediction modules the previous hidden state together with shifted token embeddings, preserving autoregressive dependencies among future steps, and GLM-5~\cite{zeng2026glm} shares those parameters across all depths, as EAGLE does, to reduce memory cost.

\subsection{Post-hoc Drafter}

Post-hoc drafters are trained on top of a frozen target model by finetuning on target-generated responses, and differ mainly in how a block of draft tokens is produced.
Medusa~\cite{cai2024medusa} attaches parallel heads that draft every position from the final hidden state, while EAGLE-3~\cite{li2025eagle3} re-enters a lightweight decoder once per drafted token, conditioned on target hidden states, preserving token-to-token dependency.
Recent parallel drafters build on DFlash~\cite{chen2026dflashblockdiffusionflash}, which predicts the whole block in one pass, with DSpark~\cite{cheng2026dsparkconfidencescheduledspeculativedecoding} restoring intra-block dependency through a lightweight causal head.

Two limitations are nonetheless inherent to post-hoc drafting.
First, the drafter is narrowly aligned with the target, being finetuned on a dataset orders of magnitude smaller than the pretraining corpus.
Second, scaling that training to the pretraining corpus is compute-heavy from the multi-step or multi-layer overhead, which has not been shown to improve the backbone's downstream performance and may even degrade training.
ECHO overcomes both: joint pretraining aligns it with the target, and its supervision improves the backbone at modest overhead.

\subsection{Looped Transformers}

Looped transformers reuse a shared block of layers across iterations, buying computational depth without new parameters: parameter efficiency~\citep{lanalbert}, adaptive computation~\citep{dehghaniuniversal}, reasoning at LLM scale~\citep{saunshi2025reasoning,zhu2025scaling}, test-time compute scaling~\citep{geiping2025scaling}, and depth allocated per token~\citep{bae2025mixture}. In all of them recursion refines the same token position under a next-token objective; ECHO instead recurses along the horizon, each step conditioning on the next future token to predict one position further ahead.

Yet both buy structure rather than recall, and for looping that is a trade: at matched parameters and compute it leads on reasoning but trails on perplexity and closed-book QA~\citep{saunshi2025reasoning}, and at scale its gain is credited to knowledge manipulation rather than capacity, measured against external dense families rather than a same-recipe baseline~\citep{zhu2025scaling}. ECHO pays no such price: mathematics, reasoning and code gain while multiple choice and closed-book QA hold (\Cref{fig:buckets_ds_pretrain}), and both classes gain after finetuning (\Cref{fig:s5-depth-sft}). Looping, latent chain-of-thought~\citep{hao2024training} and ECHO thus price latent computation differently: repeated passes over one position, annotated chains, and extra positions --- the one price that accepted tokens can repay.

\begin{figure}[t]
  \centering
    \renewcommand{\stk}[4]{%
    \node[#3] at ({#1+0.08*#2},{3.81+0.3*#2}) {};
    \ifnum#2=0 \node[font=\footnotesize] at (#1,3.81) {#4};%
    \else \node[font=\footnotesize, inner sep=0pt] at ({#1+0.08*#2+0.04},{3.81+0.3*#2+0.07}) {#4};\fi}
  \resizebox{\textwidth}{!}{%
\begin{tikzpicture}[x=1cm, y=1cm,
    lab/.style={font=\sffamily\footnotesize}]

\node[shared, minimum width=2.5cm] (emb) at (1.17,1.9) {Embedding Layer};
\foreach \x/\lab in {0.09/x_{t}, 0.81/x_{t+1}, 1.53/x_{t+2}, 2.25/x_{t+3}} {
  \node[tok] at (\x,0.95) {$\lab$};
  \draw[arr] (\x,1.14) -- (\x,1.59);
}
\draw[ln] (emb.east) -- (2.68,1.9);
\fill[edge] (2.68,1.9) circle (1.3pt);
\draw[arr] (2.68,1.9) -- (3.27,1.9);
\node[lab] at (2.975,2.14) {$t\!=\!1$};
\draw[ln] (2.68,1.9) -- (2.68,0.3) -- (12.65,0.3);
\node[lab, fill=white, inner sep=1.5pt] at (3.45,0.3) {$t>1$};

\node[decoder, text opacity=0] at (4.35,2.0) {Decoder Layer\\$\times\,(L-1)$};
\node[decoder, text opacity=0] at (4.3,1.95) {Decoder Layer\\$\times\,(L-1)$};
\node[decoder] (stack) at (4.25,1.9) {Decoder Layer\\$\times\,(L-1)$};
\draw[arr] (stack.east) -- (5.70,1.9);

\node[shared] (decL) at (6.6,1.9) {Decoder\\Layer $L$};
\node[fusion, minimum width=1.5cm] (fus) at (9.9,1.9) {Fusion\\Layer};
\node[font=\large] at (8.25,1.9) {$\times\,D$};
\draw[arr, rounded corners=5pt] (decL.north) -- (6.6,2.60) -- (9.9,2.60) -- (fus.north);
\draw[arr, rounded corners=5pt] (fus.south)  -- (9.9,1.20) -- (6.6,1.20) -- (decL.south);
\fill[edge] (8.25,2.60) circle (1.3pt);
\node[shared, minimum width=4.4cm, minimum height=0.5cm] (head) at (8.50,3.1) {Output Head};
\draw[arr] (8.25,2.60) -- (8.25,2.85);

\foreach \k/\d in {0/3, 1/1, 2/2, 3/1}
  \node[lab] at (6.85+1.1*\k+0.24, 5.15) {$D\!=\!\d$};
\foreach \j in {0,...,3} {
  \pgfmathtruncatemacro{\tnum}{\j+1}
    \node[lab, anchor=east] at (6.35+0.08*\j, 3.81+0.3*\j) {$t\!=\!\tnum$};
}
\stk{6.85}{3}{bluecell}{$y_{\scriptscriptstyle t+4}$}
\stk{7.95}{3}{emptycell}{}
\stk{9.05}{3}{emptycell}{}
\stk{10.15}{3}{emptycell}{}
\stk{6.85}{2}{bluecell}{$y_{\scriptscriptstyle t+3}$}
\stk{7.95}{2}{emptycell}{}
\stk{9.05}{2}{bluecell}{$y_{\scriptscriptstyle t+5}$}
\stk{10.15}{2}{emptycell}{}
\stk{6.85}{1}{bluecell}{$y_{\scriptscriptstyle t+2}$}
\stk{7.95}{1}{bluecell}{$y_{\scriptscriptstyle t+3}$}
\stk{9.05}{1}{bluecell}{$y_{\scriptscriptstyle t+4}$}
\stk{10.15}{1}{bluecell}{$y_{\scriptscriptstyle t+5}$}
\stk{6.85}{0}{cell}{$y_{\scriptscriptstyle t+1}$}
\stk{7.95}{0}{cell}{$y_{\scriptscriptstyle t+2}$}
\stk{9.05}{0}{cell}{$y_{\scriptscriptstyle t+3}$}
\stk{10.15}{0}{cell}{$y_{\scriptscriptstyle t+4}$}
\foreach \x in {6.85,7.95,9.05,10.15}
  \draw[arr] (\x,3.35) -- (\x,3.59);

\node[lab, inner sep=0pt] at (8.425,0.72) {%
  \tikz[baseline=-0.6ex]\node[concat]{};\ concatenate\qquad
  \tikz[baseline=-0.6ex]\draw[edge, line width=0.6pt, fill=sharedgreen, rounded corners=1.5pt] (0,-0.1) rectangle (0.25,0.1);\ shared layer};

\draw[insetgray, line width=0.8pt, rounded corners=6pt] (-0.23,2.95) rectangle (5.20,5.45);
\node[anchor=west, font=\sffamily\bfseries\footnotesize] (ddstitle) at (0.35,5.18)
      {Dynamic Depth Sampling};
\node[lab] at (2.485,4.66) {$D\sim\pi(k)$};
\draw[insetgray, line width=0.6pt] (1.2,3.05) -- (1.2,4.48);
\foreach \y/\d/\p/\c in {4.22/1/0.6/barA, 3.76/2/0.3/barB, 3.30/3/0.1/barC} {
  \node[anchor=east, lab] at (1.12,\y) {$D\!=\!\d$};
  \fill[\c] (1.2,\y-0.17) rectangle ({1.2+4.6*\p},\y+0.17);
  \node[anchor=west, font=\sffamily\footnotesize] at ({1.25+4.6*\p},\y) {\p};
}

\draw[boxdash] (11.65,0.12) rectangle (16.10,5.45);
\node[anchor=west, font=\sffamily\bfseries\footnotesize] at (11.77,5.18) {Fusion Layer};

\node[fusion, minimum width=1.5cm, minimum height=0.46cm] (rmsL) at (12.65,1.9) {RMSNorm};
\node[fusion, minimum width=1.5cm, minimum height=0.46cm] (rmsR) at (14.34,1.9) {RMSNorm};
\draw[arr] (12.65,0.3) -- (rmsL.south);
\foreach \x/\lab in {13.26/h_{t}, 13.98/h_{t+1}, 14.70/h_{t+2}, 15.42/h_{t+3}}
  \node[tok, minimum width=0.64cm, minimum height=0.42cm] at (\x,0.72) {$\lab$};
\draw[ln, decorate, decoration={brace, amplitude=4pt}] (12.92,0.98) -- (15.76,0.98);
\draw[arr] (14.34,1.12) -- (rmsR.south);
\node[concat] (cat) at (13.50,2.62) {};
\draw[arr] (rmsL.north) |- (cat.west);
\draw[arr] (rmsR.north) |- (cat.east);
\node[fusion, minimum width=2.9cm, minimum height=0.46cm] (proj) at (13.50,3.45) {Linear Projection};
\draw[arr] (cat.north) -- (proj.south);
\draw[arr] (proj.north) -- (13.50,4.62);
\node[lab, anchor=west, align=left] at (13.58,4.34) {to Decoder\\Layer $L$};
\end{tikzpicture}
  }
  \caption{Overview of ECHO. The backbone's final decoder layer $L$ and output head are reused to predict every future horizon. Dynamic Depth Sampling (DDS) draws one prediction depth $D\sim\pi$ per optimizer step, with probabilities that decay along depth. Green blocks denote layers shared with the backbone.}
  \label{fig:echo_arch}
\end{figure}

\section{ECHO: Efficient Causal Self-Recursion}
We first show that final-layer sharing benefits the backbone through direct multi-horizon supervision, and then dynamic depth sampling makes it affordable at greater depth.
\subsection{Direct Multi-Horizon Supervision via Final-Layer Sharing}
We show that reusing the backbone's final layer turns multi-horizon supervision into a direct training signal: downstream performance improves over auxiliary MTP, and shows further gains at greater depth.

\paragraph{Causal Self-Recursion:}
we follow the autoregressive design of DeepSeek-V3~\citep{liu2024deepseek} to preserve causal dependency, where each additional horizon is predicted from the previous final-layer hidden state $h^L_{t-1}$ and the embedding $e(y_{t})$ of the previously predicted token. As shown in~\Cref{fig:echo_arch},
rather than attaching a dedicated module for each depth, we recursively apply the backbone's final decoder layer $T^{L}(\cdot)$ and output head at every horizon, connected to the previous step by a lightweight fusion layer $F(\cdot)$:
\begin{equation}
\label{eq: hidden-states}
    h^L_{t} =
    \begin{cases} 
        h^L_1, & t=1, \\
        T^L\big(F\left(h^L_{t-1},e(x_{t})\right)\big), & t>1,
    \end{cases}
\end{equation}
apart from the lightweight fusion layer, every parameter used for future prediction is also used for next-token generation. Supervision from every horizon shapes the backbone directly, and the same layer can be applied repeatedly to draft beyond the trained depth.

We show two benefits of the resulting direct multi-horizon supervision:

\paragraph{Sharing improves downstream performance, especially in reasoning capability.}
To isolate the effect of sharing, we compare ECHO with DeepSeek's MTP~\citep{liu2024deepseek} at matched prediction depth (\Cref{fig:buckets_ds_pretrain}).
Under this comparison, sharing the final layer gives ECHO the lead at depth~1, most clearly on math, reasoning and code ($+0.79$, $+0.57$ and $+1.35$), and it reaches $+2.63$ on reasoning and $+2.23$ on code at depth~3.
The difference lies in where multi-horizon supervision mainly lands.
With auxiliary modules, the signal is largely absorbed by the modules dedicated to individual horizons, leaving the backbone's hidden layers specialized for the next token~\citep{mehra2025multitoken}.
Sharing removes this bottleneck by construction: since the final layer predicts every horizon, supervision from all horizons is concentrated on the parameters that generate the next token.

\paragraph{Sharing promotes longer-horizon supervision.}
To examine how each design uses added depth, we compare each family with itself from depth~1 to depth~3.
Averaged over all fourteen benchmarks, deeper supervision improves ECHO by $0.63$ points, whereas DeepSeek's MTP gains only $0.16$, and ECHO's advantage on reasoning and code persists as depth increases (\Cref{fig:buckets_ds_pretrain}).
This divergence follows from where each new horizon's loss goes: in ECHO it reaches the backbone directly, whereas in DeepSeek's MTP it mostly trains the newly added module, as discussed above.
This explains why auxiliary MTP rarely goes deeper in practice: DeepSeek-V3 trains with a single MTP module~\citep{liu2024deepseek}, and MiMo finds that adding MTP layers brings no further improvement~\citep{xiaomi2025mimo}.

\begin{figure}[t]
  \centering
  \makebox[\textwidth][s]{\hfill%
  \begin{subfigure}[t]{0.49\textwidth}
    \centering
    \includegraphics[width=\linewidth]{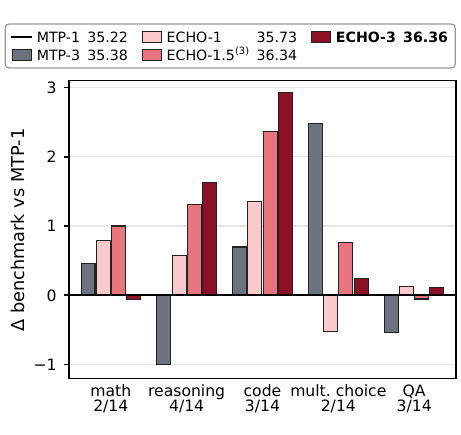}
    \caption{In pretraining, the gain concentrates on reasoning and code, and grows with depth.}
    \label{fig:buckets_ds_pretrain}
  \end{subfigure}\hfill%
  \begin{subfigure}[t]{0.49\textwidth}
    \centering
    \includegraphics[width=\linewidth]{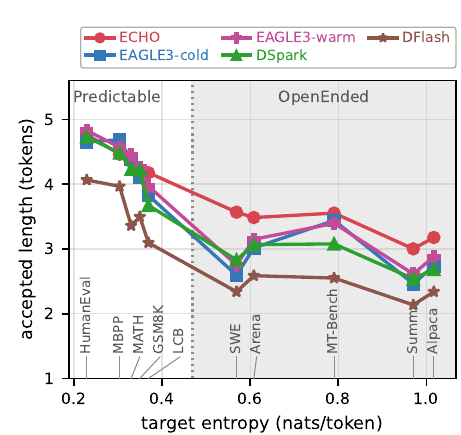}
    \caption{At inference, it keeps the longest accepted length, widening with entropy.}
    \label{fig:accept_entropy_merged}
  \end{subfigure}\hfill}%
  \caption{\textbf{What final-layer sharing buys, in pretraining and at inference.}
    (a) Mean per-task score difference from MTP-1 within each task category, over the
    fourteen benchmarks of \Cref{tab:benchmark_results}, with the category size below
    each label and the fourteen-task averages in the legend.
    (b) Accepted length at $T{=}1$ with rejection sampling, against the target's
    predictive entropy; the warm-start EAGLE-3 drafter is trained from DeepSeek's MTP
    baseline, following \citet{kimiteam2026k3}.}
  \label{fig:sharing_pays}
\end{figure}

\FloatBarrier

\subsection{Affordable Long-Horizon Supervision via Dynamic Depth Sampling (DDS)}
\label{sec:dds}

We show that dynamically sampling the training depth lowers the cost of long-horizon supervision without sacrificing its gains in downstream performance and drafting quality.

\paragraph{Dynamic Depth Sampling.}
At each optimizer step, DDS draws a prediction depth $D \sim \pi(k)$ over $\{1, \dots, K\}$ and unrolls the recursion of~\eqref{eq: hidden-states} for depths $1$ through $D$ (\Cref{fig:echo_arch}), so the MTP training overhead scales with the expected depth
\begin{equation}
  \label{eq:expected-depth}
  \mathbb{E}[D] = \sum_{k=1}^{K} k\,\pi(k) .
\end{equation}
In practice, we use $\pi = (0.6, 0.3, 0.1)$ with $K{=}3$, giving $\mathbb{E}[D] = 1.5$, half the overhead of training all three depths.
Throughout, this \emph{MTP training overhead} is reported in units of the depth-1 module, that is, unrolls of the shared layer per optimizer step; it is the increment on top of the backbone, not the end-to-end cost of a training step.

\paragraph{DDS lowers the MTP training overhead while preserving downstream benefit.}
ECHO-1.5$^{(3)}$ matches ECHO-3 on the benchmark average ($36.34$ vs.\ $36.36$, \Cref{fig:buckets_ds_pretrain}) at half the MTP training overhead.
This holds because DDS changes how often each depth is trained, not which depths are trained.
Since depths are sampled from $\{1,2,3\}$, ECHO-1.5$^{(3)}$ still receives depth-3 supervision, only not at every step, so it keeps the benefit of long-horizon supervision without unrolling every depth in every batch.
Because deeper horizons condition on longer chains of recursive hidden states, their supervision grows noisier with depth, so DDS concentrates training on reliable shallow positions and protects the shared backbone from deep-horizon noise, much as stochastic depth trains deeper layers less often~\citep{huang2016deep}.
At this cost, ECHO-1.5$^{(3)}$ also outperforms MTP-1 on math, reasoning and code ($+1.00$, $+1.31$, $+2.37$) and still leads MTP-3 at twice its budget ($+0.55$, $+2.31$, $+1.67$; \Cref{fig:buckets_ds_pretrain,tab:benchmark_results}).

\section{Pushing the Frontier of Speculative Decoding under Realistic Serving}
\label{sec:serving}
We analyze how ECHO's end-to-end paradigm intrinsically aligns the drafter with the target, entirely eliminating the need for post-hoc drafter finetuning.

\subsection{Training Shallow, Drafting Deep}
\label{sec:draft-deep}

We first show how DDS aligns training with the acceptance dynamics of speculative decoding, then demonstrate how last-layer sharing enables depth extrapolation beyond the trained depth.

\paragraph{DDS aligns training with speculative decoding.}
At inference, the drafter proposes $K$ tokens and the target verifies them in parallel, but draft tokens are committed only up to the first rejection~\citep{leviathan2023fast}.
With acceptance rate $a_i \in (0,1]$ at position $i$, position $d$ is reached with probability $\prod_{i=1}^{d-1} a_i$~\citep{wu2026dpace,zhou2026overcomingjointintractabilitylossless}, so the expected accepted length is
\begin{equation}
  \label{eq:depth-match}
  \mathbb{E}[\tau] = 1 + \sum_{d=1}^{K} \prod_{i=1}^{d} a_i ,
\end{equation}
where $\tau$ is the accepted length, the bonus token plus the accepted drafts.
A rejection at depth $k$ thus truncates the $c_k = K - k + 1$ drafts at and after it, so shallow depths dominate accepted length.
DDS effectively trains depth $k$ with a frequency that decays along depth,
\begin{equation}
  \label{eq:dds-weight}
  f_k = \mathbb{P}(D \ge k) = \sum_{j=k}^{K} \pi(j) ,
\end{equation}
so $\pi = (0.6, 0.3, 0.1)$ gives $f = (1, 0.4, 0.1)$.
Under lossless verification, the rejection rate is $1 - a_k = \mathrm{TV}(p_k, q_k)$~\citep{leviathan2023fast}.
LK losses~\citep{samarin2026lklosses} exploit this identity to optimize acceptance directly, but treat each position equally.
Instead, relating the rejection rate to the data distribution $d_k$ and accounting for truncation across depths gives (\cref{app:dds})
\begin{equation}
  \label{eq:chain}
  \mathbb{E}[\tau]
  \ \ge\ 1 + K - \tfrac{1}{\sqrt{2}}\, \mathcal{L}_{\text{MTP}}(f)
  \ \ge\ 1 + K - \tfrac{1}{\sqrt{2}}\, \mathcal{L}_{\text{MTP}}(\mathbf{1}),
\end{equation}
where $\mathcal{L}_{\text{MTP}}(f) = \sqrt{\sum_k c_k^2 / f_k}\,\sqrt{\sum_k f_k \mathrm{KL}(d_k \,\Vert\, q_k)}$ bounds the MTP loss.
The last inequality compares DDS with the uniform depth supervision  ($f=\mathbf{1}$) and holds for schedules that decay toward the optimum (\cref{prop:decay}).
The uniform depth supervision weights every depth's error equally, whereas decaying $f_k$ tightens the bound, so reducing the training loss translates more directly into accepted length.
DDS thus concentrates target-draft alignment on the shallow depths that determine acceptance, preserving accepted length despite reduced supervision at greater depths.
Existing drafters encode this decay by reweighting the loss while still training every depth at every step~\citep{chen2026dflashblockdiffusionflash,cheng2026dsparkconfidencescheduledspeculativedecoding};
DDS instead encodes it in how often each depth is trained, so the training compute follows the decay.

\paragraph{Extrapolating Beyond the Training Horizon.}
We refer to drafting beyond the maximum depth seen during training as \emph{depth extrapolation}. As shown in~\Cref{fig:extra}, beyond the trained depth ($K>3$), ECHO keeps gaining at a nearly constant rate, and ECHO-1.5$^{(3)}$ retains this gain at half the MTP training overhead of ECHO-3, whereas the gains of MTP-3 shrink with each step and saturate. Sharing also makes extrapolation the natural complement to DDS: training concentrates supervision on shallow depths, and recursion carries it to deeper ones at inference.

\subsection{Staying Aligned under realistic setting}

We then show that ECHO outperforms existing post-hoc drafters, with larger gains on high-entropy, realistic, and open-ended tasks, thanks to two complementary factors: causal drafting ensures consistency among draft tokens, while joint pretraining aligns the draft distribution with the target.

\begin{figure}[t]
  \centering
  \begin{subfigure}[b]{0.33\textwidth}
    \centering
    \includegraphics[width=\linewidth]{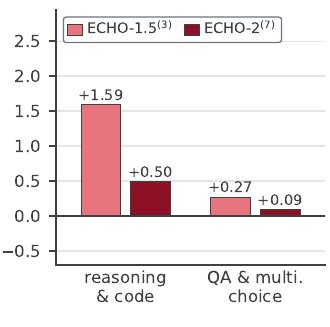}
    \caption{Deeper sampling trails ECHO-1.5$^{(3)}$ before finetuning.}
    \label{fig:s5-depth-pre}
  \end{subfigure}\hfill
  \begin{subfigure}[b]{0.33\textwidth}
    \centering
    \includegraphics[width=\linewidth]{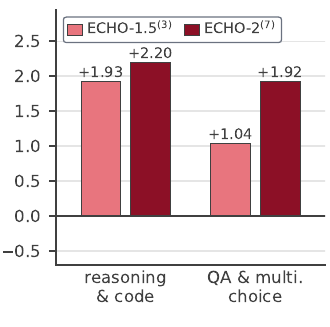}
    \caption{Finetuning widens the lead on both classes.}
    \label{fig:s5-depth-sft}
  \end{subfigure}\hfill
  \begin{subfigure}[b]{0.33\textwidth}
    \centering
    \includegraphics[width=\linewidth]{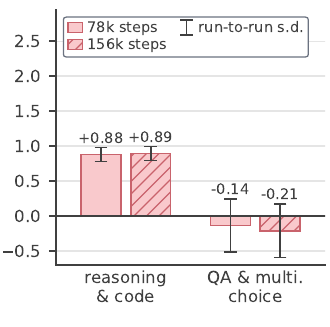}
    \caption{The lead against the spread two matched runs measure.}
    \label{fig:s5-conditions}
  \end{subfigure}
  \caption{\textbf{The advantage persists under deeper sampling, supervised finetuning
    and a doubled token budget.} Mean
    per-task gain over the matched DeepSeek baseline, by task class: MTP-1 before
    finetuning, MTP-1-Instruct after it. (\subref{fig:s5-conditions}) shows the gap at
    step 78k and at step 156k, and the error bars are the run-to-run standard deviation
    from an independent repeat of the 78k recipe, $0.08$ on reasoning \& code and
    $0.38$ on QA \& multiple choice. Doubling the budget moves
    the reasoning \& code gap by $0.01$, a sixth of that spread, whereas the repeat
    moves the QA \& multiple choice gap by $0.54$, more than its own. The reasoning \& code
    advantage therefore survives both conditions and the run-to-run spread, while the
    QA \& multiple choice difference stays inside it.}
  \label{fig:section5}
\end{figure}
\paragraph{Causal Drafting Matters in Open-Ended Generation.}
As shown in~\Cref{fig:accept_entropy_merged}, ECHO and other causal drafters outperform parallel drafters on high-entropy tasks, consistent with~\citet{liu2026angelspecrealworldhighperformance}; \Cref{fig:entropy} shows the same ordering in decode speedup.
When the next tokens are uncertain, the target's choice is hard to anticipate, so the probability of acceptance falls off steeply with depth and most of a long draft is discarded.
A causal drafter adapts to this by extending its draft step by step, while a parallel drafter emits a fixed-length block whose deeper tokens are mostly rejected at verification.
The balance shifts on simple code and mathematics benchmarks such as HumanEval, MBPP, and GSM8K, where long stretches of output are determined by syntax or derivation, and block-parallel drafting becomes worthwhile~\citep{liu2026angelspecrealworldhighperformance}, whereas realistic software engineering tasks such as SWE-bench remain high-entropy and still favor causal drafting (\Cref{fig:accept_entropy_merged}).
Such simple tasks, however, are a small fraction of real-world usage, account for under 8\% of consumer ChatGPT messages, while open-ended practical guidance, information seeking, and writing account for nearly 80\%~\citep{chatterji2025people}.

\paragraph{Joint Pretraining Closes the Alignment Gap.}
Acceptance depends on how closely the draft distribution matches the target's~\citep{leviathan2023fast,zhou2024distillspec}, and a drafter can better match the target where it has seen the target's data. ECHO is trained with the target end-to-end and sees its full pretraining corpus. In contrast, auxiliary drafters are fit to a frozen target on a small finetuning set, so their acceptance drops under distribution shift~\citep{liu2024online,hong2025training}.
The EAGLE-3 pair in~\Cref{fig:accept_entropy_merged} isolates this effect: with architecture and finetuning data held fixed, warm-started EAGLE-3 achieves a longer accepted length than its cold-start counterpart, consistent with recent findings that pretrained initialization strengthens drafters, most on out-of-domain inputs~\citep{bie2026osprey}.
Yet on open-ended tasks, warm-starting still falls short of ECHO, with the largest gap on SWE-bench.
ECHO is instead optimized jointly with the target on every pretraining update, so its alignment is maintained throughout training, in line with the high acceptance rates reported for MTP modules trained jointly during pretraining~\citep{liu2024deepseek,xiaomi2025mimo}.

\section{The Advantage Holds across Data, Depth and Runs}
\label{sec:persistence}

We finally examine  whether ECHO's advantage holds along three axes: more pretraining data, deeper sampling, and repeated runs.

\subsection{Scaling with Data}

\paragraph{ECHO's advantage persists with more pretraining data.}
We extend pretraining of ECHO-1 and DeepSeek's MTP-1 from step 78k to step 156k, doubling the token budget.
Both models improve, and ECHO-1 keeps its lead over DeepSeek's MTP-1 measured at the same step: $+0.52$ at step 78k and $+0.50$ at step 156k, and on reasoning \& code it is $+0.88$ and $+0.89$ (\Cref{fig:s5-conditions}, \Cref{tab:train-twice}).
This indicates that ECHO's advantage is stable under data scaling rather than an early-training effect that more data would erase.

\subsection{Scaling with Depth}
\label{sec:persist-depth}

\paragraph{ECHO's advantage holds across sampling depths.}
We next train ECHO-2$^{(7)}$, which samples $D$ over $\{1,\dots,7\}$ with $\mathbb{E}[D]=2$, at $4/3$ the MTP training overhead of ECHO-1.5$^{(3)}$.
It trails ECHO-1.5$^{(3)}$ in pretraining but overtakes it after SFT by 0.49, reaching the largest lead over DeepSeek's MTP-Instruct of any configuration ($+2.10$; \Cref{tab:sft}).
The gap in pretraining may be due to errors accumulating at deeper positions, which makes their supervision noisier and draws training away from the reliable shallow depths.
After SFT, however, the deeper supervision appears to pay off, possibly because finetuning makes better use of what the backbone learned.

\subsection{Stability across Runs}
\label{sec:persist-variance}

\paragraph{ECHO's lead is robust to training and evaluation noise.}
A reported gap can come from noise rather than the method itself.
We eliminate evaluation noise by decoding greedily, so the remaining source is training.
To measure it, we pretrain ECHO-1 and DeepSeek's MTP-1 a second time under the same recipe and compare the two runs at the same step.
Across the two runs, ECHO leads DeepSeek's MTP-1 by $+0.52$ and $+0.78$, and its lead on reasoning \& code shifts by only $0.11$ (\Cref{fig:s5-conditions}; \Cref{app:stability}).
ECHO's overall and reasoning \& code leads are therefore reproducible across runs.

\section{Experiment Setup}
\label{sec:experiment-setup}

Full architecture, prompt, scoring and speculative decoding details are given in \Cref{sec:appendix-setup}.

\noindent \textbf{Models and Baselines}: Every configuration shares a 10B-A1B Latent MoE backbone \citep{blakeman2025nvidia} with 24 layers, multi-head latent attention \citep{liu2024deepseek}, and 256 routed experts with eight active per token \citep{dai2024deepseekmoe}, trained under the same recipe so that the comparison isolates the MTP design.

\noindent \textbf{Pretraining}: 600B tokens at sequence length 4,096 and batch size 1,920 in bfloat16 with Muon \citep{jordan2024muon}, on 24 nodes of eight H800 GPUs.

\noindent \textbf{Dynamic Depth Sampling}: ECHO-1.5$^{(3)}$ samples $D$ over $\{1,2,3\}$ with $\pi=(0.6,0.3,0.1)$, giving $\mathbb{E}[D]=1.5$ and ECHO-2$^{(7)}$ samples over $\{1,\dots,7\}$ with $\pi=(0.520,0.220,0.121,0.068,0.036,0.019,0.016)$, giving $\mathbb{E}[D]=2$.

\noindent \textbf{Supervised Finetuning}: Every configuration is finetuned with the same recipe, 5,000 steps over one pass of an in-house chat corpus at context length 32,768, keeping its pretrained MTP depths.

\noindent \textbf{Computational Cost}:
Each pretraining run takes approximately 3 days on 192 NVIDIA H800 GPUs with a global batch size of 1,920, followed by 5 hours of SFT on the same hardware. Post-hoc drafter training uses 24 H800 GPUs with a global batch size of 48, requiring about 22 hours for EAGLE-3, 19 hours for DFlash, and 55 hours for DSpark.

\noindent \textbf{Benchmark Evaluation}: Fourteen few-shot \citep{brown2020language} benchmarks decoded greedily, grouped by task form into mathematics, reasoning, multi-subject multiple choice, code and closed-book question answering, with CoT prompting \citep{wei2022chain} on MATH, GSM8K and BBH.

\noindent\textbf{Speculative decoding:} We adopt the same temperature setting ($T=1$) for both the target and draft models, following Bebop~\citep{li2026breakingentropyboundsaccelerating} and GLM-5.2~\citep{glm52}.  This matters for RL training: rejection sampling with draft temperature$>0$ keeps acceptance stable as target entropy rises, whereas greedy drafting degrades sharply at higher temperatures~\citep{li2026breakingentropyboundsaccelerating}. All drafters are served with SGLang on a single node of eight H800 GPUs and budget-matched at seven proposed positions plus one bonus token per verification step. We report decode-only throughput and mean accepted length on ten tasks, split by the target model's predictive entropy into predictable and open-ended groups.

\section{Conclusion}

We presented ECHO, which turns multi-token prediction from an auxiliary task into part of the autoregressive objective.
By reusing the backbone's own final layer at every horizon, ECHO lets multi-horizon supervision shape the parameters that generate the next token, and Dynamic Depth Sampling makes deeper horizons affordable by training them less often.
Unlike auxiliary MTP such as DeepSeek, ECHO improves with added depth, and ECHO-1.5$^{(3)}$ matches the downstream gain of ECHO-3 at half the MTP overhead.

At inference, the same shared layer makes the pretrained model its own drafter with close draft-target alignment: it drafts beyond its trained depth where auxiliary modules saturate, and it outperforms post-hoc drafters on open-ended, high-entropy generation.
This advantage holds under a doubled token budget, a repeated pretraining run, and supervised finetuning.

\section*{Limitations}
Due to compute constraints, pretraining was repeated once and finetuning was run with a single seed.
How far the gains from added depth and from depth extrapolation extend, how DDS behaves at greater depths, and why deeper sampling pays off only after finetuning are left to future work.


\clearpage
\bibliography{Main}

@article{achiam2023gpt,
  title={{GPT-4} Technical Report},
  author={Achiam, Josh and Adler, Steven and Agarwal, Sandhini and Ahmad, Lama and Akkaya, Ilge and Aleman, Florencia Leoni and Almeida, Diogo and Altenschmidt, Janko and Altman, Sam and Anadkat, Shyamal and others},
  journal={arXiv preprint arXiv:2303.08774},
  year={2023}
}

@inproceedings{hao2024training,
  title     = {Training Large Language Models to Reason in a Continuous Latent Space},
  author    = {Hao, Shibo and Sukhbaatar, Sainbayar and Su, DiJia and Li, Xian and Hu, Zhiting and Weston, Jason and Tian, Yuandong},
  booktitle = {Conference on Language Modeling},
  year      = {2025}
}

@article{touvron2023llama,
  title={{Llama 2}: Open Foundation and Fine-Tuned Chat Models},
  author={Touvron, Hugo and Martin, Louis and Stone, Kevin and Albert, Peter and Almahairi, Amjad and Babaei, Yasmine and Bashlykov, Nikolay and Batra, Soumya and Bhargava, Prajjwal and Bhosale, Shruti and others},
  journal={arXiv preprint arXiv:2307.09288},
  year={2023}
}

@article{bai2023qwen,
  title={{Qwen} Technical Report},
  author={Bai, Jinze and Bai, Shuai and Chu, Yunfei and Cui, Zeyu and Dang, Kai and Deng, Xiaodong and Fan, Yang and Ge, Wenbin and Han, Yu and Huang, Fei and others},
  journal={arXiv preprint arXiv:2309.16609},
  year={2023}
}

@article{liu2024deepseek,
  title={{DeepSeek-V3} Technical Report},
  author={Liu, Aixin and Feng, Bei and Xue, Bing and Wang, Bingxuan and Wu, Bochao and Lu, Chengda and Zhao, Chenggang and Deng, Chengqi and Zhang, Chenyu and Ruan, Chong and others},
  journal={arXiv preprint arXiv:2412.19437},
  year={2024}
}

@article{blakeman2025nvidia,
  title={{NVIDIA} {N}emotron 3: Efficient and Open Intelligence},
  author={Blakeman, Aaron and Grattafiori, Aaron and Basant, Aarti and Gupta, Abhibha and Khattar, Abhinav and Renduchintala, Adi and Vavre, Aditya and Shukla, Akanksha and Bercovich, Akhiad and Ficek, Aleksander and others},
  journal={arXiv preprint arXiv:2512.20856},
  year={2025}
}

@article{zeng2026glm,
  title={{GLM}-5: From Vibe Coding to Agentic Engineering},
  author={Zeng, Aohan and Lv, Xin and Hou, Zhenyu and Du, Zhengxiao and Zheng, Qinkai and Chen, Bin and Yin, Da and Ge, Chendi and Huang, Chenghua and Xie, Chengxing and others},
  journal={arXiv preprint arXiv:2602.15763},
  year={2026}
}

@inproceedings{gloeckle2024better,
  title={Better \& faster large language models via multi-token prediction},
  author={Gloeckle, Fabian and Idrissi, Badr Youbi and Rozi{\`e}re, Baptiste and Lopez-Paz, David and Synnaeve, Gabriel},
  booktitle={International Conference on Machine Learning},
  year={2024}
}

@inproceedings{cai2024medusa,
  title={{MEDUSA}: Simple {LLM} Inference Acceleration Framework with Multiple Decoding Heads},
  author={Cai, Tianle and Li, Yuhong and Geng, Zhengyang and Peng, Hongwu and Lee, Jason D and Chen, Deming and Dao, Tri},
  booktitle={International Conference on Machine Learning},
  year={2024}
}

@inproceedings{li2024eagle,
  title={{EAGLE}: Speculative Sampling Requires Rethinking Feature Uncertainty},
  author={Li, Yuhui and Wei, Fangyun and Zhang, Chao and Zhang, Hongyang},
  booktitle={International Conference on Machine Learning},
  year={2024}
}

@misc{glm52,
  title        = {{GLM}-5.2: Built for Long-Horizon Tasks},
  author       = {{GLM Team}},
  year         = {2026},
  howpublished = {\url{https://z.ai/blog/glm-5.2}},
  note         = {Z.ai blog}
}

@inproceedings{saunshi2025reasoning,
  title={Reasoning with latent thoughts: On the power of looped transformers},
  author={Saunshi, Nikunj and Dikkala, Nishanth and Li, Zhiyuan and Kumar, Sanjiv and J Reddi, Sashank},
  booktitle={International Conference on Learning Representations},
  year={2025}
}

@inproceedings{dehghaniuniversal,
  title={Universal Transformers},
  author={Dehghani, Mostafa and Gouws, Stephan and Vinyals, Oriol and Uszkoreit, Jakob and Kaiser, Lukasz},
  booktitle={International Conference on Learning Representations},
  year={2019}
}

@inproceedings{lanalbert,
  title={{ALBERT}: A Lite {BERT} for Self-supervised Learning of Language Representations},
  author={Lan, Zhenzhong and Chen, Mingda and Goodman, Sebastian and Gimpel, Kevin and Sharma, Piyush and Soricut, Radu},
  booktitle={International Conference on Learning Representations},
  year={2020}
}

@article{zhu2025scaling,
  title={Scaling latent reasoning via looped language models},
  author={Zhu, Rui-Jie and Wang, Zixuan and Hua, Kai and Zhang, Tianyu and Li, Ziniu and Que, Haoran and Wei, Boyi and Wen, Zixin and Yin, Fan and Xing, He and others},
  journal={arXiv preprint arXiv:2510.25741},
  year={2025}
}

@article{qiu2025gated,
  title={Gated Attention for Large Language Models: Non-linearity, Sparsity, and Attention-Sink-Free},
  author={Qiu, Zihan and Wang, Zekun and Zheng, Bo and Huang, Zeyu and Wen, Kaiyue and Yang, Songlin and Men, Rui and Yu, Le and Huang, Fei and Huang, Suozhi and others},
  journal={Advances in Neural Information Processing Systems},
  volume={38},
  pages={100092--100118},
  year={2025}
}

@article{su2024roformer,
  title={{RoFormer}: Enhanced Transformer with Rotary Position Embedding},
  author={Su, Jianlin and Ahmed, Murtadha and Lu, Yu and Pan, Shengfeng and Bo, Wen and Liu, Yunfeng},
  journal={Neurocomputing},
  volume={568},
  pages={127063},
  year={2024},
  publisher={Elsevier}
}

@inproceedings{dai2024deepseekmoe,
  title={{DeepSeekMoE}: Towards Ultimate Expert Specialization in Mixture-of-Experts Language Models},
  author={Dai, Damai and Deng, Chengqi and Zhao, Chenggang and Xu, RX and Gao, Huazuo and Chen, Deli and Li, Jiashi and Zeng, Wangding and Yu, Xingkai and Wu, Yu and others},
  booktitle={Proceedings of the 62nd Annual Meeting of the Association for Computational Linguistics (Volume 1: Long Papers)},
  pages={1280--1297},
  year={2024}
}

@article{shazeer2020glu,
  title={{GLU} Variants Improve Transformer},
  author={Shazeer, Noam},
  journal={arXiv preprint arXiv:2002.05202},
  year={2020}
}

@misc{jordan2024muon,
  title  = {Muon: An optimizer for hidden layers in neural networks},
  author = {Jordan, Keller and Jin, Yuchen and Boza, Vlado and You, Jiacheng and Cesista, Franz and Newhouse, Laker and Bernstein, Jeremy},
  year   = {2024},
  url    = {https://kellerjordan.github.io/posts/muon/}
}

@article{wang2024mmlu,
  title={{MMLU-Pro}: A More Robust and Challenging Multi-Task Language Understanding Benchmark},
  author={Wang, Yubo and Ma, Xueguang and Zhang, Ge and Ni, Yuansheng and Chandra, Abhranil and Guo, Shiguang and Ren, Weiming and Arulraj, Aaran and He, Xuan and Jiang, Ziyan and others},
  journal={Advances in Neural Information Processing Systems},
  volume={37},
  pages={95266--95290},
  year={2024}
}

@inproceedings{hendrycksmeasuring,
  title={Measuring Massive Multitask Language Understanding},
  author={Hendrycks, Dan and Burns, Collin and Basart, Steven and Zou, Andy and Mazeika, Mantas and Song, Dawn and Steinhardt, Jacob},
  booktitle={International Conference on Learning Representations},
    year={2021}
}

@article{sakaguchi2021winogrande,
  title={{W}ino{G}rande: An Adversarial {W}inograd Schema Challenge at Scale},
  author={Sakaguchi, Keisuke and Bras, Ronan Le and Bhagavatula, Chandra and Choi, Yejin},
  journal={Communications of the ACM},
  volume={64},
  number={9},
  pages={99--106},
  year={2021},
  publisher={ACM New York, NY, USA}
}

@inproceedings{kazemi2025big,
  title={{BIG}-{B}ench Extra Hard},
  author={Kazemi, Mehran and Fatemi, Bahare and Bansal, Hritik and Palowitch, John and Anastasiou, Chrysovalantis and Mehta, Sanket Vaibhav and Jain, Lalit K and Aglietti, Virginia and Jindal, Disha and Chen, Yuanzhu Peter and others},
  booktitle={Proceedings of the 63rd Annual Meeting of the Association for Computational Linguistics (Volume 1: Long Papers)},
  pages={26473--26501},
  year={2025}
}

@inproceedings{suzgun2023challenging,
  title={Challenging {BIG}-{B}ench Tasks and Whether Chain-of-Thought Can Solve Them},
  author={Suzgun, Mirac and Scales, Nathan and Sch{\"a}rli, Nathanael and Gehrmann, Sebastian and Tay, Yi and Chung, Hyung Won and Chowdhery, Aakanksha and Le, Quoc and Chi, Ed and Zhou, Denny and others},
  booktitle={Findings of the Association for Computational Linguistics: ACL 2023},
  pages={13003--13051},
  year={2023}
}

@inproceedings{reingpqa,
  title={{GPQA}: A Graduate-Level {G}oogle-{P}roof {Q\&A} Benchmark},
  author={Rein, David and Hou, Betty Li and Stickland, Asa Cooper and Petty, Jackson and Pang, Richard Yuanzhe and Dirani, Julien and Michael, Julian and Bowman, Samuel R},
  booktitle={First Conference on Language Modeling},
    year={2024}
}

@inproceedings{hendrycks2measuring,
  title={Measuring Mathematical Problem Solving With the {MATH} Dataset},
  author={Hendrycks, Dan and Burns, Collin and Kadavath, Saurav and Arora, Akul and Basart, Steven and Tang, Eric and Song, Dawn and Steinhardt, Jacob},
  booktitle={Thirty-fifth Conference on Neural Information Processing Systems Datasets and Benchmarks Track (Round 2)},
    year={2021}
}

@inproceedings{yang2018hotpotqa,
  title={{HotpotQA}: A Dataset for Diverse, Explainable Multi-Hop Question Answering},
  author={Yang, Zhilin and Qi, Peng and Zhang, Saizheng and Bengio, Yoshua and Cohen, William and Salakhutdinov, Ruslan and Manning, Christopher D},
  booktitle={Proceedings of the 2018 conference on empirical methods in natural language processing},
  pages={2369--2380},
  year={2018}
}

@inproceedings{mallen2023not,
  title={When not to trust language models: Investigating effectiveness of parametric and non-parametric memories},
  author={Mallen, Alex and Asai, Akari and Zhong, Victor and Das, Rajarshi and Khashabi, Daniel and Hajishirzi, Hannaneh},
  booktitle={Proceedings of the 61st annual meeting of the association for computational linguistics (volume 1: Long papers)},
  pages={9802--9822},
  year={2023}
}

@inproceedings{bang2025hallulens,
  title={{H}allu{L}ens: {LLM} Hallucination Benchmark},
  author={Bang, Yejin and Ji, Ziwei and Schelten, Alan and Hartshorn, Anthony and Fowler, Tara and Zhang, Cheng and Cancedda, Nicola and Fung, Pascale},
  booktitle={Proceedings of the 63rd Annual Meeting of the Association for Computational Linguistics (Volume 1: Long Papers)},
  pages={24128--24156},
  year={2025}
}

@article{wei2024measuring,
  title={Measuring short-form factuality in large language models},
  author={Wei, Jason and Karina, Nguyen and Chung, Hyung Won and Jiao, Yunxin Joy and Papay, Spencer and Glaese, Amelia and Schulman, John and Fedus, William},
  journal={arXiv preprint arXiv:2411.04368},
  year={2024}
}

@article{liu2023your,
  title={Is your code generated by {ChatGPT} really correct? {R}igorous evaluation of large language models for code generation},
  author={Liu, Jiawei and Xia, Chunqiu Steven and Wang, Yuyao and Zhang, Lingming},
  journal={Advances in neural information processing systems},
  volume={36},
  pages={21558--21572},
  year={2023}
}

@inproceedings{jain2025livecodebench,
  title={{LiveCodeBench}: Holistic and Contamination Free Evaluation of Large Language Models for Code},
  author={Jain, Naman and Han, King and Gu, Alex and Li, Wen-Ding and Yan, Fanjia and Zhang, Tianjun and Wang, Sida and Solar-Lezama, Armando and Sen, Koushik and Stoica, Ion},
  booktitle={International Conference on Learning Representations},
  year={2025}
}

@misc{chen2026dflashblockdiffusionflash,
      title={{DFlash}: Block Diffusion for Flash Speculative Decoding}, 
      author={Jian Chen and Yesheng Liang and Zhijian Liu},
      year={2026},
      eprint={2602.06036},
      archivePrefix={arXiv},
      primaryClass={cs.CL},
      url={https://arxiv.org/abs/2602.06036}, 
}

@misc{cheng2026dsparkconfidencescheduledspeculativedecoding,
      title={{DSpark}: Confidence-Scheduled Speculative Decoding with Semi-Autoregressive Generation}, 
      author={Xin Cheng and Xingkai Yu and Chenze Shao and Jiashi Li and Yunfan Xiong and Yi Qian and Jiaqi Zhu and Shirong Ma and Xiaokang Zhang and Jiasheng Ye and Qinyu Chen and Chengqi Deng and Jiping Yu and Damai Dai and Zhengyan Zhang and Yixuan Wei and Yixuan Tan and Wenkai Yang and Runxin Xu and Yu Wu and Zhean Xu and Xuanyu Wang and Muyang Chen and Rui Tian and Xiao Bi and Zhewen Hao and Shaoyuan Chen and Huanqi Cao and Wentao Zhang and Anyi Xu and Huishuai Zhang and Dongyan Zhao and Wenfeng Liang},
      year={2026},
      eprint={2607.05147},
      archivePrefix={arXiv},
      primaryClass={cs.AI},
      url={https://arxiv.org/abs/2607.05147}, 
}

@misc{li2026specforge,
    title={{SpecForge}: A Flexible and Efficient Open-Source Training Framework for Speculative Decoding},
    author={Shenggui Li and Chao Wang and Yikai Zhu and Yubo Wang and Fan Yin and Shuai Shi and Yefei Chen and Xiaomin Dong and Qiaoling Chen and Jin Pan and Ji Li and Laixin Xie and Yineng Zhang and Lei Yu and Yonggang Wen and Ivor Tsang and Tianwei Zhang},
    year={2026},
    eprint={2603.18567},
    archivePrefix={arXiv},
    primaryClass={cs.LG},
    url={https://arxiv.org/abs/2603.18567},
}

@misc{kimiteam2026k3,
      title={{Kimi K3}: Open Frontier Intelligence}, 
      author={{Kimi Team}},
      year={2026},
      eprint={2607.24653},
      archivePrefix={arXiv},
      primaryClass={cs.CL},
      url={https://arxiv.org/abs/2607.24653}, 
}

@misc{huang2026dominodecouplingcausalmodeling,
      title={Domino: Decoupling Causal Modeling from Autoregressive Drafting in Speculative Decoding}, 
      author={Jianuo Huang and Yaojie Zhang and Qituan Zhang and Hao Lin and Hanlin Xu and Linfeng Zhang},
      year={2026},
      eprint={2605.29707},
      archivePrefix={arXiv},
      primaryClass={cs.CL},
      url={https://arxiv.org/abs/2605.29707}, 
}

@inproceedings{leviathan2023fast,
  title     = {Fast Inference from Transformers via Speculative Decoding},
  author    = {Leviathan, Yaniv and Kalman, Matan and Matias, Yossi},
  booktitle = {Proceedings of the 40th International Conference on Machine Learning},
  series    = {Proceedings of Machine Learning Research},
  volume    = {202},
  pages     = {19274--19286},
  year      = {2023}
}

@article{wu2026dpace,
  title   = {{D-PACE}: Dynamic Position-Aware Cross-Entropy for Parallel Speculative Drafting},
  author  = {Wu, Tianyu and Yao, Yu and Qi, Zhenting and Zheng, Han and Wang, Zhuohan and Ma, Haoran and Liao, Lawrence and Lakkaraju, Himabindu and Li, Ju and Du, Yilun},
  journal = {arXiv preprint arXiv:2605.18810},
  year    = {2026}
}

@misc{zhou2026overcomingjointintractabilitylossless,
      title={Overcoming Joint Intractability with Lossless Hierarchical Speculative Decoding}, 
      author={Yuxuan Zhou and Fei Huang and Heng Li and Fengyi Wu and Tianyu Wang and Jianwei Zhang and Junyang Lin and Zhi-Qi Cheng},
      year={2026},
      eprint={2601.05724},
      archivePrefix={arXiv},
      primaryClass={cs.AI},
      url={https://arxiv.org/abs/2601.05724}, 
}

@inproceedings{huang2016deep,
  title     = {Deep Networks with Stochastic Depth},
  author    = {Huang, Gao and Sun, Yu and Liu, Zhuang and Sedra, Daniel and Weinberger, Kilian Q.},
  booktitle = {European Conference on Computer Vision (ECCV)},
  pages     = {646--661},
  year      = {2016}
}

@misc{liu2026angelspecrealworldhighperformance,
      title={{AngelSpec}: Towards Real-World High Performance Inference with Speculative Decoding}, 
      author={Hong Liu and Rui Cen and Junhan Shi and Guangshuo Qin and Jiebin Zhang and Tianyu Liu and Runzhi Fan and Guoliang Zhao and Ruobing Xie and Kai Zhang and Song Liu and Guanghua Yu and Jianchen Zhu},
      year={2026},
      eprint={2607.25852},
      archivePrefix={arXiv},
      primaryClass={cs.CL},
      url={https://arxiv.org/abs/2607.25852}, 
}

@techreport{chatterji2025people,
  title       = {How People Use {ChatGPT}},
  author      = {Chatterji, Aaron and Cunningham, Thomas and Deming, David J. and Hitzig, Zo{\"e} and Ong, Christopher and Shan, Carl Yan and Wadman, Kevin},
  institution = {National Bureau of Economic Research},
  type        = {Working Paper},
  number      = {34255},
  year        = {2025}
}

@article{xiaomi2025mimo,
   title={{MiMo}: Unlocking the Reasoning Potential of Language Model -- From Pretraining to Posttraining},
   author={{LLM-Core Xiaomi}}, journal={arXiv preprint arXiv:2505.07608}, year={2025}}

@inproceedings{geiping2025scaling,
  title     = {Scaling up Test-Time Compute with Latent Reasoning: A Recurrent Depth Approach},
  author    = {Geiping, Jonas and McLeish, Sean and Jain, Neel and Kirchenbauer, John and Singh, Siddharth and Bartoldson, Brian R. and Kailkhura, Bhavya and Bhatele, Abhinav and Goldstein, Tom},
  booktitle = {Advances in Neural Information Processing Systems},
  year      = {2025}
}

@inproceedings{bae2025mixture,
  title     = {Mixture-of-Recursions: Learning Dynamic Recursive Depths for Adaptive Token-Level Computation},
  author    = {Bae, Sangmin and Kim, Yujin and Bayat, Reza and Kim, Sungnyun and Ha, Jiyoun and Schuster, Tal and Fisch, Adam and Harutyunyan, Hrayr and Ji, Ziwei and Courville, Aaron and Yun, Se-Young},
  booktitle = {Advances in Neural Information Processing Systems},
  year      = {2025}
}

@inproceedings{mehra2025multitoken,
  title     = {On Multi-Token Prediction for Efficient {LLM} Inference},
  author    = {Mehra, Somesh and Garcia, Javier Alonso and Mauch, Lukas},
  booktitle = {ICLR 2025 Workshop on Sparsity in LLMs (SLLM)},
  year      = {2025},
  note      = {arXiv:2502.09419}
}

@inproceedings{li2025eagle3,
  title   = {{EAGLE-3}: Scaling up Inference Acceleration of Large Language Models via Training-Time Test},
  author  = {Li, Yuhui and Wei, Fangyun and Zhang, Chao and Zhang, Hongyang},
  booktitle = {Advances in Neural Information Processing Systems},
  year    = {2025}
}

@article{hong2025training,
  title   = {Training Domain Draft Models for Speculative Decoding: Best Practices and Insights},
  author  = {Hong, Fenglu and Raju, Ravi and Li, Jonathan Lingjie and Li, Bo and Thakker, Urmish and Ravichandran, Avinash and Jain, Swayambhoo and Hu, Changran},
  journal = {arXiv preprint arXiv:2503.07807},
  year    = {2025}
}

@inproceedings{liu2024online,
  title     = {Online Speculative Decoding},
  author    = {Liu, Xiaoxuan and Hu, Lanxiang and Bailis, Peter and Cheung, Alvin and Deng, Zhijie and Stoica, Ion and Zhang, Hao},
  booktitle = {Proceedings of the 41st International Conference on Machine Learning},
  series    = {Proceedings of Machine Learning Research},
  volume    = {235},
  year      = {2024}
}

@inproceedings{zhou2024distillspec,
  title     = {{DistillSpec}: Improving Speculative Decoding via Knowledge Distillation},
  author    = {Zhou, Yongchao and Lyu, Kaifeng and Rawat, Ankit Singh and Menon, Aditya Krishna and Rostamizadeh, Afshin and Kumar, Sanjiv and Kagy, Jean-Fran{\c{c}}ois and Agarwal, Rishabh},
  booktitle = {International Conference on Learning Representations},
  year      = {2024}
}

@inproceedings{bie2026osprey,
  title     = {Osprey: Target-agnostic Pre-training Makes Stronger Drafters in Speculative Decoding},
  author    = {Bie, Fengxiang and Jian, Yuqing and Yu, Yifan and Zhou, Zhongzhu and Shao, Zelei and Athiwaratkun, Ben and Song, Shuaiwen Leon and Xu, Chenfeng and Wu, Xiaoxia and Zhang, Tianyi},
  booktitle = {Proceedings of the 2026 Conference on Empirical Methods in Natural Language Processing},
  year      = {2026},
  note      = {arXiv:2609.09338}
}

@article{cobbe2021gsm8k,
  title   = {Training Verifiers to Solve Math Word Problems},
  author  = {Cobbe, Karl and Kosaraju, Vineet and Bavarian, Mohammad and Chen, Mark and Jun, Heewoo and Kaiser, Lukasz and Plappert, Matthias and Tworek, Jerry and Hilton, Jacob and Nakano, Reiichiro and Hesse, Christopher and Schulman, John},
  journal = {arXiv preprint arXiv:2110.14168},
  year    = {2021}
}

@inproceedings{bisk2020piqa,
  title     = {{PIQA}: Reasoning about Physical Commonsense in Natural Language},
  author    = {Bisk, Yonatan and Zellers, Rowan and Le Bras, Ronan and Gao, Jianfeng and Choi, Yejin},
  booktitle = {Proceedings of the AAAI Conference on Artificial Intelligence},
  year      = {2020}
}

@inproceedings{wei2022chain,
  title     = {Chain-of-Thought Prompting Elicits Reasoning in Large Language Models},
  author    = {Wei, Jason and Wang, Xuezhi and Schuurmans, Dale and Bosma, Maarten and Ichter, Brian and Xia, Fei and Chi, Ed and Le, Quoc V and Zhou, Denny},
  booktitle = {Advances in Neural Information Processing Systems},
  year      = {2022}
}

@inproceedings{brown2020language,
  title     = {Language Models are Few-Shot Learners},
  author    = {Brown, Tom B. and Mann, Benjamin and Ryder, Nick and Subbiah, Melanie and Kaplan, Jared and Dhariwal, Prafulla and Neelakantan, Arvind and Shyam, Pranav and Sastry, Girish and Askell, Amanda and others},
  booktitle = {Advances in Neural Information Processing Systems},
  year      = {2020}
}

@misc{gao2024evalharness,
  title        = {A framework for few-shot language model evaluation},
  author       = {Gao, Leo and Tow, Jonathan and Abbasi, Baber and Biderman, Stella and Black, Sid and DiPofi, Anthony and Foster, Charles and Golding, Laurence and Hsu, Jeffrey and Le Noac'h, Alain and Li, Haonan and McDonell, Kyle and Muennighoff, Niklas and Ociepa, Chris and Phang, Jason and Reynolds, Laria and Schoelkopf, Hailey and Skowron, Aviya and Sutawika, Lintang and Tang, Eric and Thite, Anish and Wang, Ben and Wang, Kevin and Zou, Andy},
  year         = {2024},
  month        = {07},
  howpublished = {Zenodo, version v0.4.3},
  doi          = {10.5281/zenodo.12608602},
  url          = {https://zenodo.org/records/12608602}
}

@article{chen2021codex,
  title   = {Evaluating Large Language Models Trained on Code},
  author  = {Chen, Mark and Tworek, Jerry and Jun, Heewoo and Yuan, Qiming and Pinto, Henrique Ponde de Oliveira and Kaplan, Jared and Edwards, Harri and Burda, Yuri and Joseph, Nicholas and Brockman, Greg and others},
  journal = {arXiv preprint arXiv:2107.03374},
  year    = {2021}
}

@inproceedings{samarin2026lklosses,
  title     = {LK Losses: Direct Acceptance Rate Optimization for Speculative Decoding},
  author    = {Samarin, Alexander and Krutikov, Sergei and Shevtsov, Anton and Skvortsov, Sergei and Fisin, Filipp and Golubev, Alexander},
  booktitle = {Proceedings of the 43rd International Conference on Machine Learning},
  year      = {2026}
}

@misc{li2026breakingentropyboundsaccelerating,
      title={Breaking Entropy Bounds: Accelerating RL Training via MTP with Rejection Sampling}, 
      author={Yucheng Li and Huiqiang Jiang and Yang Xu and Jianxin Yang and Yi Zhang and Yizhong Cao and Yuhao Shen and Fan Zhou and Rui Men and Jianwei Zhang and An Yang and Bowen Yu and Bo Zheng and Fei Huang and Junyang Lin and Dayiheng Liu and Jingren Zhou},
      year={2026},
      eprint={2606.12370},
      archivePrefix={arXiv},
      primaryClass={cs.LG},
      url={https://arxiv.org/abs/2606.12370}, 
}
\bibliographystyle{colm2024_conference}

\clearpage
\appendix

\section{Experiment Setup}
\label{sec:appendix-setup}

\subsection{Pretraining}

\noindent \textbf{Model Architecture}: The backbone is a 10B-A1B Latent Mixture-of-Experts transformer \citep{blakeman2025nvidia}, that is 10B parameters in total with 1B activated per token, stacked as 24 layers of hidden size 1{,}024 with 16 attention heads. Attention is multi-head latent \citep{liu2024deepseek}, with 512-dimensional query and key--value latents, and we add gated attention \citep{qiu2025gated}, per-head query--key normalization and rotary positional embeddings \citep{su2024roformer}. Every layer is a mixture of experts in a latent space: the expert block projects the hidden state down to 512 dimensions, runs 256 fine-grained routed experts \citep{dai2024deepseekmoe} of intermediate size 1{,}024 there, and projects back, with the router reading the hidden state rather than the latent. A sigmoid gate sends each token to eight routed experts, balanced by a sequence-level auxiliary loss, plus one shared expert. The SwiGLU \citep{shazeer2020glu} feed-forward width is 5{,}120 and the vocabulary holds 201{,}216 tokens.

\noindent \textbf{Training Settings}: We train the model on 600B tokens with a context sequence length of 4,096 and a global batch size of 1,920. Training uses bfloat16 precision and the Muon optimizer \citep{jordan2024muon}, with AdamW retained for the embeddings, the biases, the output head and the expert router, a learning rate of $6\times10^{-4}$, a warmup of 1,000 steps, weight decay of $0.1$, and gradient clipping at $1.0$. Following \citet{liu2024deepseek}, we set the MTP loss scaling factor to $0.3$. Pretraining runs on 24 nodes of eight H800 GPUs each, with eight-way expert parallelism, stage-one optimizer sharding, no tensor or pipeline parallelism, and full activation recomputation.

\noindent \textbf{Computational Cost}:
Each pretraining run is conducted on 24 nodes with eight NVIDIA H800 80GB GPUs per node (192 GPUs in total), taking approximately three days to complete. Supervised finetuning uses the same hardware configuration and takes approximately five hours per run.
For the post-hoc speculative decoding baselines, we train the draft models on three nodes (24 H800 80GB GPUs), using a global batch size of 48. EAGLE-3 (cold-start and warm-start) and DFlash use a per-GPU batch size of 2 without gradient accumulation, whereas DSpark uses a per-GPU batch size of 1 with two gradient accumulation steps due to GPU memory constraints. EAGLE-3 requires approximately 22 hours per run in measured wall-clock time. Based on the recorded compute time and an estimated 1.27$\times$ overhead for startup, checkpointing, and cluster variability, DSpark requires approximately 55 hours, while DFlash requires approximately 19 hours.

\noindent \textbf{Methods Compared}: All six configurations are trained under the recipe above, so the comparison isolates the MTP design. ECHO reuses the backbone's last layer as the body of every MTP depth; DeepSeek's MTP baselines follow \citet{liu2024deepseek} and keep an independent module per depth, sharing only the embedding and the output head. ECHO-1 and MTP-1 train one depth, ECHO-3 and MTP-3 train three at every step, ECHO-1.5$^{(3)}$ samples depths one to three with $\mathbb{E}[D]=1.5$, and ECHO-2$^{(7)}$ extends dynamic depth sampling to depths one through seven with $\mathbb{E}[D]=2$. The two cost-matched comparisons are ECHO-1 against MTP-1 at equal depth-one cost, and ECHO-1.5$^{(3)}$ against MTP-3 at half the MTP cost.

\noindent \textbf{Dynamic Depth Sampling}: The sampled depth is $D\sim\pi$, where $\pi$ places probabilities 0.6, 0.3 and 0.1 on depths 1, 2 and 3, so $\mathbb{E}[D]=1.5$. Each optimizer step draws one $D$ and trains depths 1 through $D$, and every micro-batch in the step shares the same $D$, which keeps per-step compute constant. Averaging the loss over the $D$ trained depths gives depth $d$ the weight $\mathbb{E}[\mathbf{1}\{D\ge d\}/D]$, and these weights sum to one: ECHO reweights depths by how often it samples them, not by attaching a per-depth loss coefficient. $D$ is a deterministic function of the seed and the step index, so every rank derives it without communication and a resumed run continues the same sequence. 

\noindent \textbf{Supervised Finetuning}: All configurations adopt an identical recipe: 5,000 steps over one pass of an in-house chat SFT corpus at a context length of 32,768, with eight-way context parallelism. Each keeps the MTP depths it was pretrained with, so the finetuned comparison isolates the same design difference as the pretrained one.

\subsection{Benchmark Evaluation}

\noindent \textbf{Evaluation Setup}: We evaluate the pretrained checkpoints on fourteen few-shot benchmarks, grouped by the form of the task following \citet{liu2024deepseek}: mathematics with MATH \citep{hendrycks2measuring} and GSM8K \citep{cobbe2021gsm8k}; reasoning with BBH \citep{suzgun2023challenging}, BBEH \citep{kazemi2025big}, WinoGrande \citep{sakaguchi2021winogrande} and PIQA \citep{bisk2020piqa}; multi-subject multiple choice with MMLU \citep{hendrycksmeasuring} and MMLU-Pro \citep{wang2024mmlu}; code with HumanEval+ and MBPP+ \citep{liu2023your} and LiveCodeBench \citep{jain2025livecodebench}; closed-book question answering with PopQA \citep{mallen2023not}, PreciseWikiQA \citep{bang2025hallulens} and HotpotQA \citep{yang2018hotpotqa}. We report unweighted averages within each group and across all fourteen benchmarks. Benchmarks such as GPQA \citep{reingpqa}, SimpleQA \citep{wei2024measuring} and HMMT are left out for lack of resolution rather than for any defect in their scoring: every configuration scores near their floor, so the row moves more than it resolves.

\noindent \textbf{Prompts and Decoding}: All benchmarks are evaluated in the few-shot in-context format \citep{brown2020language}. Chain-of-thought prompting \citep{wei2022chain} is used on MATH, GSM8K and BBH, with eight, eight and four demonstrations respectively. The remaining benchmarks are prompted to answer directly, with five demonstrations for the multiple-choice benchmarks, ten for closed-book question answering and two for BBEH. BBH is released under both a chain-of-thought and an answer-directly prompt over the same items for examining the chain-of-thought effect \citep{suzgun2023challenging}; we report its chain-of-thought form in the main results. Decoding is greedy throughout. Generation is capped per benchmark: two tokens for MMLU and eleven for MMLU-Pro, which are answered with a single option label; $65$ for the closed-book sets and $101$ for PopQA; $401$ for MATH, $512$ for GSM8K, BBEH and HumanEval+, $601$ for BBH, $768$ for LiveCodeBench and $8192$ for MBPP+. WinoGrande and PIQA involve no generation. Stop sequences are applied when the answer is extracted rather than during decoding, so the cap is what bounds each output.

\noindent \textbf{Scoring and Averaging}: Each benchmark's score is an unweighted macro-average over its sub-tasks rather than a per-item micro-average. Four scoring rules are used, one per group of benchmarks. GSM8K, MATH, BBH, MMLU and MMLU-Pro are scored by rule from the extracted answer. HumanEval+, MBPP+ and LiveCodeBench are scored as pass@1 by execution \citep{chen2021codex}. WinoGrande and PIQA are scored by length-normalized log-likelihood over the candidate continuations rather than by generation, as in \citet{liu2024deepseek}, with the normalization of \citet{gao2024evalharness}. PreciseWikiQA, HotpotQA, PopQA and BBEH are graded against the reference answer by an LLM judge. In \Cref{tab:benchmark_results} the three factual-knowledge sets are graded by gpt-oss-120b and BBEH by Claude Opus 4.8; in \Cref{tab:sft} all four are graded by Claude Opus 4.8. Of these, the three factual-knowledge sets report the F1 of overall accuracy and accuracy among attempted items, with abstentions in the denominator of the former only. Category averages and the overall figure are unweighted means over benchmarks.

\subsection{Speculative Decoding under Serving}

\noindent \textbf{Serving Stack}: All serving measurements use SGLang on one node of eight H800 GPUs, with tensor parallelism~8 and FlashAttention-3.

\noindent \textbf{Drafters}: All drafters are budget-matched at seven proposed positions plus one target bonus token, i.e.\ eight verification slots per request per step. ECHO reuses the target's final layer and full output head. EAGLE-3~\citep{li2025eagle3} uses one draft layer with a reduced 32{,}000-entry head scattered back to target width, in a cold-start and a warm-start variant; following \citet{kimiteam2026k3}, the warm start initializes the draft from DeepSeek's pretrained MTP weights, and is always normalized by the baseline of its own run. DSpark~\citep{cheng2026dsparkconfidencescheduledspeculativedecoding} and DFlash~\citep{chen2026dflashblockdiffusionflash} are block-parallel with five draft layers, the former adding a low-rank Markov transition bias.

\noindent \textbf{Serving Metric}: We report decode-only throughput, decoded tokens over summed per-request decode latency, which excludes prefill; metrics that include prefill confound the comparison across tasks with differing prompt lengths. Speedup is recomputed against the baseline of the same run, and accepted length is decoded tokens per verification step.

\noindent \textbf{Tasks}: Serving results use ten tasks in two groups, split by predictive entropy as in \citet{liu2026angelspecrealworldhighperformance}. Predictable, following the commonly adopted SpecForge benchmarks~\citep{li2026specforge}: HumanEval, MBPP, MATH-500, GSM8K, LiveCodeBench. Open-ended: SWE-bench, Arena-Hard, MT-Bench, summarization, Alpaca. Question answering, retrieval-augmented generation, translation and AIME-25 are excluded, all having median outputs of twenty to thirty tokens, too few verification steps for a stable decode-only ratio. The entropy axis is the target's own predictive entropy, measured separately.

\noindent \textbf{Verification Budget}: DSpark is evaluated with its confidence scheduler disabled, verifying the full draft block at every step, as in the diagnostic baseline of \citet{cheng2026dsparkconfidencescheduledspeculativedecoding}. The scheduler is a calibrated cost model rather than a threshold: it needs a latency curve re-profiled for each GPU and backend \citep{cheng2026dsparkconfidencescheduledspeculativedecoding,liu2026angelspecrealworldhighperformance}, and a confidence head calibrated on the serving distribution, since the survival probabilities it multiplies must be absolute rather than merely well-ranked. That calibration does not transfer, because the mechanism assumes acceptance varies sharply across domains in the first place. With the head enabled but neither artifact available, per-step cost rose 5--14\% and nothing was pruned.

\noindent \textbf{Depth Extrapolation}: The depth sweep of \Cref{fig:extra} uses a separate harness. We report the accepted length obtained at serving depth $K$, $\tau_K = 1 + \sum_{j\le K}\prod_{i\le j} p_i$, for conditional acceptance rate $p_i$ at depth $i$. The expression is exact rather than a fit: the draft loop reaches depth $i+1$ only after depth $i$ is accepted, so $\prod_{i\le j} p_i$ is the probability that the first $j$ drafted tokens are all accepted, and summing those probabilities gives their expected number. Runs use draft depth $K{=}7$, greedy on both sides, 256 new tokens, and pool GSM8K, HumanEval and MATH-500 weighted by rounds.

\section{Decayed Depth Supervision and Accepted Length}
\label{app:dds}

Let $p_k$, $q_k$, and $d_k$ be the target, draft, and data next-token distributions at depth $k$, each conditioned on the current prefix.
Let $\delta_k = 1 - a_k$, $c_k = K - k + 1$, and $A_k = \mathrm{KL}(d_k \,\Vert\, q_k)$, the drafter distributional loss at depth $k$.
We assume the drafter is trained on target-generated data, so $d_k = p_k$ and $A_k = \mathrm{KL}(p_k \,\Vert\, q_k)$.
All quantities are averaged over prefixes, which we assume follow the training data distribution at inference; each inequality below holds pointwise for every prefix and therefore after averaging, with Jensen's inequality moving the average inside square roots where needed.

\paragraph{Derivation of \cref{eq:chain}.}
The full chain is
\begin{equation}
  \label{eq:app-chain}
  \mathbb{E}[\tau]
  \ \overset{(i)}{\ge}\ 1 + K - \sum_{k=1}^{K} c_k\, \mathrm{TV}(p_k, q_k)
  \ \overset{(ii)}{\ge}\ 1 + K - \tfrac{1}{\sqrt{2}}\, Q
  \ \overset{(iii)}{\ge}\ 1 + K - \tfrac{1}{\sqrt{2}}\, \mathcal{L}_{\text{MTP}}(f)
  \ \overset{(iv)}{\ge}\ 1 + K - \tfrac{1}{\sqrt{2}}\, \mathcal{L}_{\text{MTP}}(\mathbf{1}),
\end{equation}
with $Q = \sum_k c_k \sqrt{A_k}$.

\emph{(i) Union bound.}
By \cref{eq:depth-match}, $1 + K - \mathbb{E}[\tau] = \sum_{d=1}^{K} \bigl( 1 - \prod_{i \le d} (1 - \delta_i) \bigr)$, where each term is the probability that draft $d$ is not accepted.
The probability of at least one rejection among depths $1, \dots, d$ is at most $\sum_{i \le d} \delta_i$, a bound tight up to second-order terms in $\delta$.
Summing over $d$, each $\delta_k$ is counted once for each of the $c_k$ values $d \ge k$, and $\delta_k = \mathrm{TV}(p_k, q_k)$ under lossless verification~\citep{leviathan2023fast}.

\emph{(ii) Pinsker's inequality.}
Since $d_k = p_k$, Pinsker's inequality gives $\mathrm{TV}(p_k, q_k) = \mathrm{TV}(d_k, q_k) \le \sqrt{A_k / 2}$.

\emph{(iii) Cauchy--Schwarz.}
Writing $c_k \sqrt{A_k} = (c_k / \sqrt{f_k}) \cdot \sqrt{f_k A_k}$,
\begin{equation}
  \label{eq:app-cs}
  Q \ \le\ \mathcal{L}_{\text{MTP}}(f) = \sqrt{\sum_{k=1}^{K} \frac{c_k^2}{f_k}}\ \sqrt{\sum_{k=1}^{K} f_k A_k},
\end{equation}
with equality if and only if $f_k \propto c_k / \sqrt{A_k}$.
Since $\mathcal{L}_{\text{MTP}}(\lambda f) = \mathcal{L}_{\text{MTP}}(f)$ for any $\lambda > 0$, schedules with different compute budgets can be compared directly.
For MTP-$K$, $\mathcal{L}_{\text{MTP}}(\mathbf{1}) = \sqrt{\sum_k c_k^2}\,\sqrt{\sum_k A_k}$, with $\sum_k c_k^2 = K(K+1)(2K+1)/6$.

\emph{(iv) Decayed schedules.}
This is \cref{prop:decay}(c).

\begin{proposition}
\label{prop:decay}
Suppose $K \ge 2$ and $0 < A_1 \le \dots \le A_K$, and let $f^\star_k \propto c_k / \sqrt{A_k}$. Then
(a) $f^\star$ is strictly decreasing in $k$;
(b) $\mathcal{L}_{\text{MTP}}(\mathbf{1}) > Q$;
(c) for $f^{(t)}_k = (f^\star_k)^t$, $\mathcal{L}_{\text{MTP}}(f^{(t)})$ is nonincreasing in $t \in [0, 1]$, from $\mathcal{L}_{\text{MTP}}(\mathbf{1})$ at $t = 0$ to $Q$ at $t = 1$.
\end{proposition}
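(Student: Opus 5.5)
For (a), (b) and (c) I would use only the definitions $c_k = K-k+1$, $Q=\sum_k c_k\sqrt{A_k}$, $\mathcal{L}_{\text{MTP}}(f)=\sqrt{\sum_k c_k^2/f_k}\,\sqrt{\sum_k f_kA_k}$, and the Cauchy--Schwarz step \cref{eq:app-cs} with its equality condition. The monotonicity in (c) comes from the convexity of a log-sum-exp function.

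\emph{(a)} The numerator $c_k$ is strictly decreasing in $k$ and positive. By hypothesis $\sqrt{A_k}$ is positive and nondecreasing. Their ratio $c_k/\sqrt{A_k}$ is therefore strictly decreasing. Since $f^\star$ is a positive multiple of this ratio, $f^\star$ is strictly decreasing.

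\emph{(b)} By \cref{eq:app-cs}, $\mathcal{L}_{\text{MTP}}(\mathbf{1}) \ge Q$, with equality if and only if $\mathbf{1} \propto c_k/\sqrt{A_k}$, that is, if and only if $f^\star$ is constant in $k$. For $K\ge 2$, part (a) says $f^\star$ takes at least two distinct values. So equality fails and the inequality is strict.

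\emph{(c)} $\mathcal{L}_{\text{MTP}}$ is invariant under positive rescaling of $f$, so I fix the normalization $f^\star_k = c_k/\sqrt{A_k}$. Set $u_k=\log c_k$ and $v_k=\tfrac12\log A_k$; both are finite because $A_k>0$. Then
\[
\frac{c_k^2}{(f^\star_k)^t} = e^{(2-t)u_k+t v_k},
\qquad
(f^\star_k)^t A_k = e^{t u_k+(2-t)v_k}.
\]
Define $\varphi(t)=\log\sum_k e^{(2-t)u_k+t v_k}$. Then
\[
2\log \mathcal{L}_{\text{MTP}}(f^{(t)}) = \varphi(t)+\varphi(2-t) =: h(t).
\]
The function $\varphi$ is log-sum-exp composed with an affine map of $t$, so it is convex. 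Hence $h$ is convex, and by construction $h(t)=h(2-t)$.

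Now take $0\le t<s\le 1$. Then $s$ lies in $[t,\,2-t]$, so $s$ is a convex combination of $t$ and $2-t$. Convexity gives $h(s)\le \max\{h(t),h(2-t)\}=h(t)$. Thus $h$, and hence $\mathcal{L}_{\text{MTP}}(f^{(t)})$, is nonincreasing on $[0,1]$.

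For the endpoints, $f^{(0)}=\mathbf{1}$, so the value at $t=0$ is $\mathcal{L}_{\text{MTP}}(\mathbf{1})$. At $t=1$ we have $h(1)=2\varphi(1)=2\log\sum_k c_k\sqrt{A_k}=2\log Q$, so the value is $Q$. This also matches the equality case of Cauchy--Schwarz, since $f^{(1)}=f^\star$.

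The main obstacle is (c). Differentiating the product of square roots directly leads to an unwieldy covariance-type sign argument. The change of variables above avoids this by rewriting both factors as the same function $\varphi$ evaluated at $t$ and $2-t$. Once that is done, only convexity plus symmetry is needed. If this approach failed, I would fall back to showing $h'(t)=\varphi'(t)-\varphi'(2-t)\le 0$ for $t\le 1$, using that $\varphi'$ is nondecreasing.
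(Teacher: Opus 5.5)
Your proof is correct. Parts (a) and (b) follow the paper's argument exactly.

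For (c) you use the same key ingredient as the paper, namely convexity of a sum of log-sum-exp functions of $t$, but you locate the minimizer differently.

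\textbf{The paper's route.} It sets $u_k=\log f^\star_k$, so that $2\log\mathcal{L}_{\text{MTP}}(f^{(t)})=\log\sum_k c_k^2e^{-tu_k}+\log\sum_k A_ke^{tu_k}$. It then notes that $t=1$ is a global minimizer, because the Cauchy--Schwarz step gives $\mathcal{L}_{\text{MTP}}(f)\ge Q$ for every positive $f$, with equality at $f^\star$. Since a convex function is nonincreasing to the left of a minimizer, (c) follows.

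\textbf{Your route.} You normalize $f^\star_k=c_k/\sqrt{A_k}$, so that both sums become the same function $\varphi$ evaluated at $t$ and at $2-t$. The minimizer at $t=1$ then comes from the reflection symmetry $h(t)=h(2-t)$ together with convexity, and you compute $h(1)=2\log Q$ directly. Your normalization is harmless, as you note: rescaling $f^\star$ by $\lambda$ rescales $f^{(t)}$ by $\lambda^t$, which leaves $\mathcal{L}_{\text{MTP}}$ unchanged.

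\textbf{Comparison.} The paper's argument is shorter because it reuses the Cauchy--Schwarz bound already established for the chain. Yours makes (c) self-contained, since it never needs the equality case of Cauchy--Schwarz. It also gives slightly more: $\mathcal{L}_{\text{MTP}}(f^{(1+s)})=\mathcal{L}_{\text{MTP}}(f^{(1-s)})$ for every $s$. In other words, over-decaying the schedule past $f^\star$ by a given exponent loosens the bound exactly as much as under-decaying by the same amount.
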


\begin{proof}
(a) $c_k$ is strictly decreasing and $1/\sqrt{A_k}$ is nonincreasing, both positive.
(b) By (a), $\mathbf{1}$ is not proportional to $f^\star$, so equality in \cref{eq:app-cs} fails.
(c) With $u_k = \log f^\star_k$, the function $F(t) = 2 \log \mathcal{L}_{\text{MTP}}(f^{(t)}) = \log \sum_k c_k^2 e^{-t u_k} + \log \sum_k A_k e^{t u_k}$ is a sum of log-sum-exps of affine functions, hence convex, and is minimized at $t = 1$ by the equality condition of \cref{eq:app-cs}; a convex function is nonincreasing to the left of its minimizer.
\end{proof}

The assumption $A_1 \le \dots \le A_K$ reflects that acceptance decays with depth.
\Cref{prop:decay} concerns how tightly the training loss bounds lost length; that DDS attains comparable $\mathbb{E}[\tau]$ at lower cost is verified empirically.

\paragraph{Data not generated by the target.}
If $d_k \ne p_k$, step (ii) uses the triangle inequality $\mathrm{TV}(p_k, q_k) \le \mathrm{TV}(d_k, q_k) + \mathrm{TV}(d_k, p_k)$ and Pinsker on both terms, which adds $\tfrac{1}{\sqrt{2}} \sum_k c_k \sqrt{\mathrm{KL}(d_k \,\Vert\, p_k)}$ to every bound in \cref{eq:app-chain}.
This term depends on neither the drafter nor the schedule, so steps (iii), (iv), and \cref{prop:decay} are unchanged.

\section{Full Benchmark Results}

\Cref{tab:benchmark_results} gives the per-benchmark scores behind the category gains of \Cref{fig:buckets_ds_pretrain}, for all six pretrained configurations over the fourteen benchmarks of the reported scope.

\noindent \textbf{Provenance of the cells}: nine of the fourteen rows are scored by us under the protocol of \Cref{sec:appendix-setup}. The other five are reported by an internal evaluation platform and are reproduced here as it returned them: PreciseWikiQA, HotpotQA and PopQA, graded by gpt-oss-120b over its full sets of $5{,}000$, $7{,}405$ and $14{,}256$ items, together with MATH ($4{,}739$ items, scored by rule) and MBPP+ ($378$ items, scored by execution). That platform's answer-extraction and judging rules are not published with this paper, so these five rows cannot be reproduced exactly from outside it. Our own runs of them exist but measure a different quantity — a different few-shot prefix, an answer-extraction rule that does not match on MATH, a weaker test bar on MBPP+, and a $600$-item sample under a different judge on the three closed-book sets — so they are not offered as a second reading.

\begin{table}[!htbp]
  \centering
  \footnotesize
  \setlength{\tabcolsep}{4pt}
  \begin{tabular}{llrrrrrr}
    \toprule
    Category & Benchmark & ECHO-1 & ECHO-3 & ECHO-1.5$^{(3)}$ & MTP-1 & MTP-3 & ECHO-2$^{(7)}$ \\
    \midrule
    \multirow{2}{*}{Mathematics}
      & GSM8K           & 61.49 & 59.82 & 63.31 & 61.64 & 63.08 & 61.33 \\
      & MATH            & 36.04 & 36.01 & 34.64 & 34.31 & 33.78 & 33.56 \\
    \midrule
    \multirow{4}{*}{Reasoning}
      & BBH             & 46.44 & 50.02 & 48.55 & 46.91 & 40.18 & 48.45 \\
      & BBEH            & 6.63 & 6.41 & 6.09 & 5.33 & 6.96 & 5.43 \\
      & PIQA            & 77.57 & 78.26 & 78.38 & 77.68 & 77.97 & 78.09 \\
      & WinoGrande      & 63.85 & 64.01 & 64.40 & 62.27 & 63.06 & 62.59 \\
    \midrule
    \multirow{2}{*}{Multi-subject MC}
      & MMLU            & 55.93 & 57.10 & 56.93 & 56.39 & 58.59 & 56.95 \\
      & MMLU-Pro        & 26.02 & 26.37 & 27.58 & 26.60 & 29.37 & 27.90 \\
    \midrule
    \multirow{3}{*}{Code}
      & HumanEval+      & 26.22 & 29.88 & 29.27 & 23.17 & 25.00 & 26.83 \\
      & MBPP+           & 47.62 & 48.68 & 47.62 & 47.62 & 47.88 & 47.88 \\
      & LiveCodeBench   & 7.00 & 7.00 & 7.00 & 6.00 & 6.00 & 5.25 \\
    \midrule
    \multirow{3}{*}{Closed-book QA}
      & PreciseWikiQA   & 7.88 & 8.30 & 8.13 & 7.48 & 7.52 & 7.54 \\
      & HotpotQA        & 21.35 & 21.29 & 20.65 & 21.18 & 20.86 & 20.57 \\
      & PopQA           & 16.23 & 15.83 & 16.16 & 16.44 & 15.09 & 15.57 \\
    \midrule
    \multicolumn{2}{l}{\textbf{Overall Average (14 tasks)}}
      & \textbf{35.73} & \textbf{36.36} & \textbf{36.34} & \textbf{35.22} & \textbf{35.38} & \textbf{35.57} \\
    \bottomrule
  \end{tabular}
  \caption{Per-benchmark scores at step 78{,}000 (base models, no EMA).}
  \label{tab:benchmark_results}
\end{table}

\section{Stability across Pretraining Runs}
\label{app:stability}

Two of the configurations were pretrained a second time under the same recipe, and the comparison survives it. Over the fourteen benchmarks of the reported scope, ECHO MTP-1 leads DeepSeek's MTP-1 by 0.52 points in the baseline run and by 0.78 in the repeat. Reasoning ($+0.57$ then $+1.33$) and code ($+1.35$ then $+1.69$) both grow, multi-subject multiple choice turns in ECHO's favor ($-0.52$ then $+1.30$), closed-book QA stays within a fifth of a point of zero in both, and mathematics is the one category that moves the other way ($+0.79$ then $-0.73$). Two evaluations of the same weights differ by 0.27 points per benchmark, which puts the noise at 0.13 on a fourteen-benchmark gap and 0.30 on a category gap. \Cref{tab:train-twice} gives the three column groups, and \Cref{fig:s5-conditions} draws the two classes with the spread these two runs measure.

\paragraph{Twice the training budget.} The baseline run was then carried to step $156{,}000$ on both sides, twice its token budget. Over the fourteen rows ECHO's lead holds, $+0.52$ at step 78k against $+0.50$ at step 156k, a change of $-0.02$ against a spread of $0.14$ across rows, with seven rows in its favor at each step, and every bucket is reproduced to within $0.2$ points. Doubling the budget therefore leaves both the overall gap and the category ordering where they were. Repeating the run reproduces the overall gap to $0.26$ points but not its split across categories, which moves by $0.3$ to $1.8$ points.

\begin{table}[!htbp]
  \centering
  \footnotesize
  \setlength{\tabcolsep}{6pt}
  \begin{tabular}{l rr rr rr}
    \toprule
    & \multicolumn{2}{c}{Baseline} & \multicolumn{2}{c}{Repeat run} & \multicolumn{2}{c}{Twice the budget} \\
    & \multicolumn{2}{c}{run 1, step 78k} & \multicolumn{2}{c}{run 2, step 78k} & \multicolumn{2}{c}{run 1, step 156k} \\
    \cmidrule(lr){2-3}\cmidrule(lr){4-5}\cmidrule(lr){6-7}
    Benchmark & ECHO & DeepSeek & ECHO & DeepSeek & ECHO & DeepSeek \\
    \midrule
    GSM8K         & 61.49 & 61.64 & 60.05 & 60.73 & 64.82 & 65.05 \\
    MATH          & 36.04 & 34.31 & 34.58 & 35.37 & 38.49 & 36.65 \\
    \midrule
    BBH           & 46.44 & 46.91 & 48.15 & 47.79 & 48.31 & 48.79 \\
    BBEH          & 6.63 & 5.33 & 5.83 & 5.59 & 6.18 & 5.29 \\
    PIQA          & 77.57 & 77.68 & 78.67 & 76.64 & 79.71 & 79.59 \\
    WinoGrande    & 63.85 & 62.27 & 64.96 & 62.27 & 65.67 & 63.30 \\
    \midrule
    MMLU          & 55.93 & 56.39 & 56.08 & 56.30 & 59.23 & 59.36 \\
    MMLU-Pro      & 26.02 & 26.60 & 27.04 & 24.21 & 26.23 & 27.28 \\
    \midrule
    HumanEval+    & 26.22 & 23.17 & 28.05 & 25.00 & 29.88 & 26.83 \\
    MBPP+         & 47.62 & 47.62 & 49.21 & 48.68 & 50.00 & 50.26 \\
    LiveCodeBench & 7.00 & 6.00 & 6.75 & 5.25 & 7.25 & 6.50 \\
    \midrule
    PreciseWikiQA & 7.88 & 7.48 & 8.12 & 8.00 & 8.56 & 9.18 \\
    HotpotQA      & 21.35 & 21.18 & 22.11 & 22.10 & 23.23 & 21.90 \\
    PopQA         & 16.23 & 16.44 & 15.48 & 16.20 & 16.38 & 16.97 \\
    \bottomrule
  \end{tabular}
  \caption{One recipe under two perturbations, both against the same baseline. \textbf{Baseline}
    is the run reported in \Cref{tab:benchmark_results}. \textbf{Repeat run} is an independent
    pretraining run of the same recipe at the same step. \textbf{Twice the budget} is the
    baseline run continued to step~156k.}
  \label{tab:train-twice}
\end{table}

\section{Next-Token Loss Does Not Separate the Two Objectives}
\label{app:trainloss}

\Cref{fig:trainloss} separates the objective into its two terms. On the backbone's next-token
loss the two runs are indistinguishable: the gap averages $0.004$ over the 78k steps and stays
between $0.004$ and $0.005$ in every 10k-step window, with ECHO-1 marginally above MTP-1
throughout. The MTP loss carries a larger gap in the same direction, $0.011$ at step~78k.
On the same token budget the two objectives therefore arrive at the same next-token likelihood,
and neither margin is one the method is meant to win.

\begin{figure}[!htbp]
    \centering
    \includegraphics[width=\textwidth]{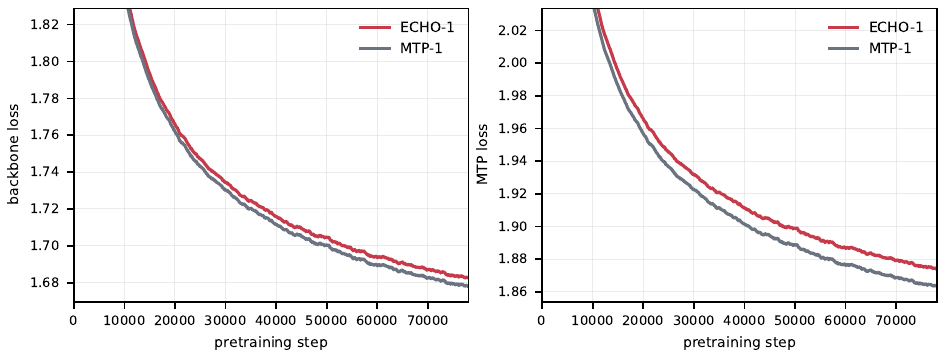}
  \caption{Training loss of ECHO-1 and MTP-1 over the first 78k steps of the baseline run,
    separated into the backbone's next-token loss (left) and the MTP loss (right).
    The logged objective is the weighted sum
    $\mathcal{L}=\mathcal{L}_{\mathrm{backbone}}+0.3\,\mathcal{L}_{\mathrm{MTP}}$, and the left
    panel is recovered by subtracting the weighted MTP loss from it. Each curve is an
    exponential moving average over all $78{,}000$ logged steps, with no subsampling; the
    vertical range is clipped to what the curves occupy after step 12k, so the first few
    thousand steps run above the frame.}
  \label{fig:trainloss}
\end{figure}

This is the expected outcome rather than a negative result. ECHO's multi-horizon objective adds no
capacity to the next-token head; it changes what the shared representation is required to support.
In predicting the next token the model is held responsible for several future positions at once,
which constrains the hidden state towards features that stay predictive beyond one step. The effect
of such a constraint is regularization, and regularization does not surface in the quantity it
constrains: next-token cross-entropy is the term both objectives already optimize, so matching it
is the premise of the comparison rather than its outcome. The two configurations separate
downstream instead, in the benchmark gaps of \Cref{tab:benchmark_results,tab:train-twice} and in
the draft quality that \Cref{sec:serving} converts into decode speedup.

\begin{figure}[!htbp]
    \centering
    \includegraphics[width=\textwidth]{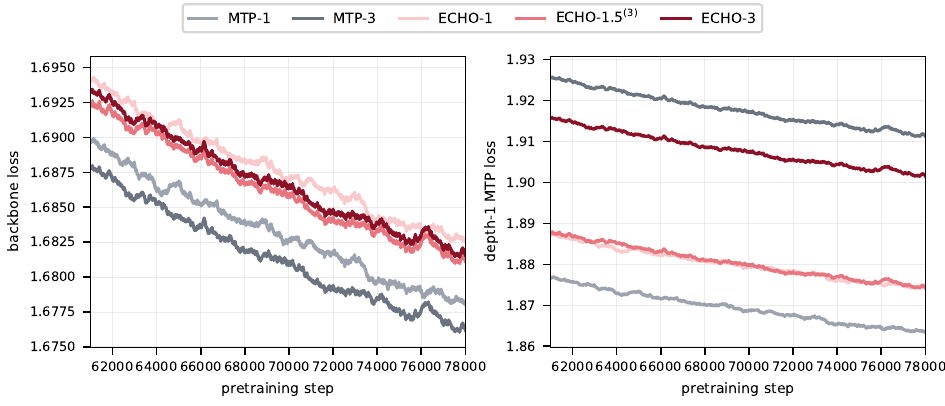}
  \caption{The same two terms across both families, over the steps all five runs share.
    ECHO-1.5$^{(3)}$'s event files start at step~60k on our side and the first thousand of
    those go to the moving average's warm-up, which fixes the window at 61k--78k; the
    full-run view of the depth-1 pair is \Cref{fig:trainloss}. On the backbone term (left) the
    five lie within $0.006$ and the ordering follows the family rather than the depth; the
    depth-1 term (right) is where added depth shows up.}
  \label{fig:trainloss-fam}
\end{figure}
\FloatBarrier

\paragraph{The same holds along depth.} \Cref{fig:trainloss-fam} repeats the split for all
five configurations, ECHO-1, ECHO-1.5$^{(3)}$ and ECHO-3 against MTP-1 and MTP-3, over the
steps all the runs share. Within a family the backbone term barely moves with depth: the three ECHO
configurations end within $0.002$ of each other, and MTP-3 sits $0.0017$ below MTP-1. Across all
five it spans $0.006$, and the ordering is not ours to like --- MTP-3 holds the lowest backbone
loss of any configuration, $1.676$ against ECHO-3's $1.682$, while ECHO-3 leads it by $0.97$
points over the fourteen benchmarks of \Cref{tab:benchmark_results}. What does move is the MTP
term, and it moves the way the weighting predicts: the per-step loss is averaged over the depths
trained at that step, so adding depths takes weight off depth~1, and the depth-1 term rises by
$0.027$ from ECHO-1.5$^{(3)}$ to ECHO-3 and by $0.048$ from MTP-1 to MTP-3. At matched depth~3
the two designs divide the horizons: reusing the final layer fits the nearest one better
($1.902$ against $1.911$) and the separate modules fit the far ones better ($1.992$ and $2.045$
against $2.002$ and $2.083$ at depths~2 and~3). ECHO-1.5$^{(3)}$'s deeper terms are logged only
on the steps that sample them, $39.5\%$ and $9.8\%$ of steps for depths~2 and~3, which is the
$f_k = \mathbb{P}(D\ge k) = (1, 0.4, 0.1)$ that $\pi=(0.6,0.3,0.1)$ prescribes in
\Cref{eq:dds-weight}. At matched step those sampled depths are where the saving is paid for:
\Cref{fig:depth2-step} puts ECHO-1.5$^{(3)}$'s depth-2 term $0.051$ above ECHO-3's at step~78k
and its depth-3 term $0.132$ above, with the curves running parallel rather than converging.
That is the shape of the trade, and it stops there. Depth~1, which both train at every step,
costs nothing at all ($1.874$ against $1.902$, in ECHO-1.5$^{(3)}$'s favour); the fourteen-task
average is within $0.02$ ($36.34$ against $36.36$, \Cref{tab:benchmark_results}); and the
accepted length holds beyond the trained depth (\Cref{fig:extra}). None of these margins is
where the comparison is settled, which is this
section's point: the terms of the training objective do not rank the two designs.

\section{After Supervised Finetuning}
\label{app:sft}

The pretrained checkpoints were finetuned on the chat SFT corpus, and
\Cref{tab:sft} reports the fourteen in-scope benchmarks for the two ECHO
configurations against DeepSeek's MTP baseline finetuned the same way.
ECHO keeps its lead after
finetuning, $+1.61$ and $+2.10$ points over the fourteen rows, and the
lead is again larger on reasoning and code than on QA and multiple choice.
Three caveats apply. The cells are decoded greedily, so
repeating an evaluation does not move the rule- and execution-graded rows, and no
error bars are drawn for them. The four judge-graded rows, BBEH and the three
closed-book sets, do carry grader variability.
BBH is read in its chain-of-thought form: under the answer-directly prompt the
finetuned output degenerates to a constant option label. 

\begin{table}[!htbp]
  \centering
  \footnotesize
  \setlength{\tabcolsep}{4pt}
  \begin{tabular}{llrrr}
    \toprule
    Category & Benchmark & MTP-Instruct & ECHO-1.5$^{(3)}$-Instruct & ECHO-2$^{(7)}$-Instruct \\
    \midrule
    \multirow{2}{*}{Mathematics}
      & GSM8K         & 61.87 & 62.70 & 65.20 \\
      & MATH          & 31.05 & 38.54 & 38.27 \\
    \midrule
    \multirow{4}{*}{Reasoning}
      & BBH           & 54.41 & 54.24 & 58.06 \\
      & BBEH          & 11.30 & 10.33 & 10.00 \\
      & PIQA          & 74.49 & 74.38 & 74.38 \\
      & WinoGrande    & 61.17 & 61.09 & 61.56 \\
    \midrule
    \multirow{2}{*}{Multi-subject MC}
      & MMLU          & 49.17 & 52.46 & 55.53 \\
      & MMLU-Pro      & 29.06 & 32.25 & 31.28 \\
    \midrule
    \multirow{3}{*}{Code}
      & HumanEval+    & 47.56 & 48.78 & 51.83 \\
      & MBPP+         & 44.97 & 51.59 & 46.30 \\
      & LiveCodeBench & 8.75 & 11.25 & 9.75 \\
    \midrule
    \multirow{3}{*}{Closed-book QA}
      & PreciseWikiQA & 4.22 & 4.90 & 5.06 \\
      & HotpotQA      & 19.04 & 16.82 & 19.16 \\
      & PopQA         & 10.69 & 10.94 & 10.77 \\
    \midrule
    \multicolumn{2}{l}{\textbf{Overall Average (14 tasks)}}
      & \textbf{36.27} & \textbf{37.88} & \textbf{38.37} \\
    \bottomrule
  \end{tabular}
  \caption{The fourteen in-scope benchmarks after one round of supervised
    finetuning. Every cell is scored under the protocol of
    \Cref{tab:benchmark_results}.}
  \label{tab:sft}
\end{table}

\section{Decode Speedup under Serving}

\Cref{fig:entropy} reports the cells of \Cref{fig:accept_entropy_merged} as decode speedup instead of accepted length; the ordering of the drafters is the one discussed in \Cref{sec:serving}.

\section{Comparability of Reported Baselines in Inference}
\label{sec:lack_fair_comparison}

The evaluation scales differ systematically. EAGLE-3 reports results across a wide range of target sizes, from Vicuna-13B up to LLaMA-3.3-70B, whereas the parallel drafters cluster at the small end: Domino~\citep{huang2026dominodecouplingcausalmodeling} evaluates only Qwen3-4B and Qwen3-8B, DSpark extends to Qwen3-14B, and the largest target any of them reports is Qwen3-Coder-30B-A3B, a sparse model with roughly 3B active parameters, so its per-token cost is closer to a 4B dense model than to a 30B one. No paper in this family evaluates a dense target above 14B (\Cref{tab:eval-models}).

\newcommand{\yes}{\textcolor{ForestGreen}{\ding{51}}}
\newcommand{\no}{\textcolor{BrickRed}{\ding{55}}}

\begin{table}[!htbp]
\centering
\caption{Target models evaluated by each method, ordered by model size.}
\label{tab:eval-models}
\setlength{\tabcolsep}{4pt}
\resizebox{\textwidth}{!}{%
\begin{tabular}{lcccccccc}
\toprule
Model & Qwen3-4B & LLaMA-3.1-8B  & Qwen3-8B & Gemma4-12B & Vicuna-13B & Qwen3-14B & Qwen3-Coder-30B-A3B & LLaMA-3.3-70B \\
\midrule
EAGLE-3 & \no  & \yes & \no  & \no  & \yes & \no  & \no  & \yes \\
DFlash  & \yes & \yes & \yes & \no  & \no  & \no  & \yes & \no  \\
Domino  & \yes & \no  & \yes & \no  & \no  & \no  & \no  & \no  \\
DSpark  & \yes & \no  & \yes & \yes & \no  & \yes & \no  & \no  \\
\bottomrule
\end{tabular}%
}
\end{table}

Beyond scale, the EAGLE-3 checkpoints these papers evaluate were never fairly trained (\Cref{tab:unfair-comparison-family}). DFlash and Domino used public checkpoints released by AngelSlim with the different training datasets and fewer samples. Drafter capacity is uncontrolled as well: at every target, the block-parallel drafter carries $2.1$--$2.6\times$ the parameters of the EAGLE-3 head. 

\begin{table}[!htbp]
\centering
\caption{Training data and drafter capacity in the DFlash-family papers' comparisons against EAGLE-3. Drafter parameter counts are computed from released configs.}
\label{tab:unfair-comparison-family}
\setlength{\tabcolsep}{5pt}
\resizebox{\textwidth}{!}{%
\begin{tabular}{lllrr}
\toprule
Method & Model & Training datasets & Samples & Draft params \\
\midrule
DFlash  & Qwen3-4B / 8B & Nemotron V2, CodeAlpaca, ShareGPT, UltraChat, LongAlign & 800K & 459M \\
EAGLE-3 & Qwen3-4B / 8B & ShareGPT, UltraChat & 530K & 218M \\
\midrule
Domino  & Qwen3-8B & Open-PerfectBlend & 1.42M & 1049M \\
EAGLE-3 & Qwen3-8B & ShareGPT, UltraChat & 530K & 400M \\
\midrule
DSpark  & Qwen3-4B / 8B / 14B & Open-PerfectBlend & 1.3M & 459M / 1049M / 1573M \\
EAGLE-3 & Qwen3-4B / 8B / 14B & Open-PerfectBlend & 1.3M & 218M / 400M / 609M \\
\bottomrule
\end{tabular}%
}
\end{table}
\FloatBarrier

Although DSpark controls the data axis: it retrains EAGLE-3 and DSpark within a single framework on identical training dataset, however, its evaluation restricts EAGLE-3 to chain drafting instead of the original tree drafting. The released code proposes one token per step and returns a single trajectory with no tree construction, top-k expansion, or candidate reranking. The baseline is therefore matched in data but evaluated in a degraded configuration, and its acceptance length understates what the method achieves as published.

\begin{figure}[!htbp]
    \centering
    \includegraphics[width=3.3in]{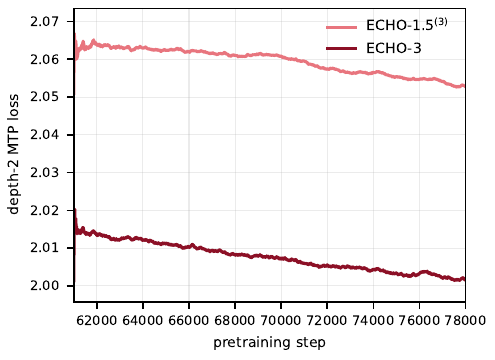}
  \caption{The depth-2 term at matched pretraining step. ECHO-1.5$^{(3)}$ samples depth~2 on
    $39.5\%$ of steps where ECHO-3 trains it on all of them, so the sampled series carries fewer
    points over this window, $6{,}721$ against $17{,}000$. Fewer is not biased: at depth~1,
    which both configurations train every step, the loss on the steps that also sample depth~2
    differs from the loss on the steps that do not by $+0.0002$, against a standard error of
    $0.0004$.}
  \label{fig:depth2-step}
\end{figure}

\section{Artifact Licensing}

\Cref{tab:artifact_licenses} summarizes the
licenses and usage terms of the training and
evaluation datasets used in our experiments.
The pretraining corpus and in-house SFT dataset
are proprietary and subject to restricted
redistribution. All fourteen evaluation benchmarks
are used solely for research evaluation and are
not redistributed.
The use of these artifacts follows their applicable
licenses and access conditions.

\begin{figure}[!htbp]
    \centering
    \includegraphics[width=3.3in]{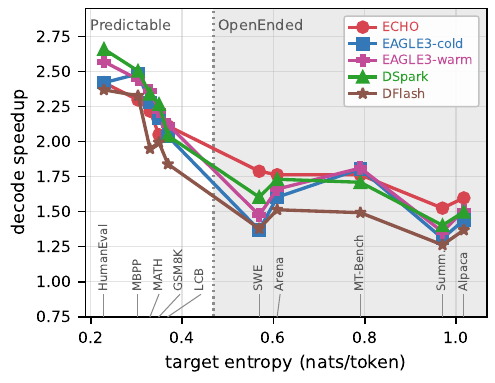}
  \caption{Decode speedup at $T{=}1$ with rejection sampling, over the same cells as
    \Cref{fig:accept_entropy_merged}. The
    warm-start EAGLE-3 drafter is trained from DeepSeek's MTP baseline, following
    \citet{kimiteam2026k3}, and is normalized by the baseline of its own run.}
  \label{fig:entropy}
\end{figure}
\FloatBarrier

\begin{table}[!htbp]
\centering
\footnotesize
\begin{tabular}{lll}
\toprule
Type & Artifact & License / Terms of Use \\
\midrule

\multicolumn{3}{l}{\textit{Training Data}} \\
\midrule
Pretraining & 600B-token corpus
    & Proprietary (redistribution restricted) \\
SFT & In-house chat corpus
    & Proprietary (redistribution restricted) \\

\midrule
\multicolumn{3}{l}{\textit{Model Weights and Checkpoints}} \\
\midrule
Weights & DeepSeek-V3 pretrained weights
    & DeepSeek Model License \\
Model & gpt-oss-120b
    & Apache-2.0 \\

\midrule
\multicolumn{3}{l}{\textit{Evaluation Datasets}} \\
\midrule
Mathematics & MATH
    & MIT (repository) \\
Mathematics & GSM8K
    & MIT \\
Reasoning & BBH
    & MIT \\
Reasoning & BBEH
    & CC BY 4.0 (data) \\
Reasoning & WinoGrande
    & Apache-2.0 \\
Reasoning & PIQA
    & Not specified \\
Multi-subject & MMLU
    & MIT \\
Multi-subject & MMLU-Pro
    & MIT \\
Code & HumanEval+
    & Apache-2.0 (EvalPlus) \\
Code & MBPP+
    & Apache-2.0 (EvalPlus) \\
Code & LiveCodeBench
    & MIT (code); data terms unclear \\
QA & PopQA
    & MIT (repository) \\
QA & PreciseWikiQA
    & CC BY-NC 4.0 (HalluLens) \\
QA & HotpotQA
    & CC BY-SA 4.0 \\

\midrule
\multicolumn{3}{l}{\textit{Additional Serving Benchmarks}} \\
\midrule
Code & HumanEval
    & MIT \\
Code & MBPP
    & CC BY 4.0 \\
Mathematics & MATH-500
    & MIT \\
Software & SWE-bench
    & MIT (repository) \\
Chat & Arena-Hard
    & Apache-2.0 (repository) \\
Chat & MT-Bench
    & Not confirmed (prompt dataset) \\
Summarization & Summarization prompts
    & Not specified \\
Instruction & Stanford Alpaca
    & CC BY-NC 4.0 \\

\midrule
\multicolumn{3}{l}{\textit{Software and Implementations}} \\
\midrule
Serving & SGLang v0.4
    & Apache-2.0 \\
Attention & FlashAttention-3
    & BSD-3-Clause \\
Drafter & EAGLE-3
    & Apache-2.0 \\
Drafter & DFlash
    & MIT \\
Drafter & DSpark
    & MIT  \\
Evaluation & EvalPlus
    & Apache-2.0 (repository) \\

\bottomrule
\end{tabular}
\caption{Licenses and terms of use for the scientific
artifacts used in our experiments, including training
data, pretrained models, evaluation benchmarks, and
third-party software. Repository-level licenses are
distinguished from dataset-specific and model-weight
licenses where applicable.}
\label{tab:artifact_licenses}
\end{table}

\section{AI Use Disclosure}

During the preparation of this manuscript, the authors used ChatGPT (OpenAI) to assist with language editing, grammatical corrections, and improving the clarity of author-written text. The AI tool was not used to generate research findings or experimental results. All AI-assisted modifications were reviewed and verified by the authors, who take full responsibility for the content of this manuscript.

\end{document}